
\documentclass[a4paper,fleqn]{cas-dc}
\usepackage[authoryear,longnamesfirst,round,sort&compress]{natbib}
\usepackage{hyperref}
\usepackage[dvipsnames]{xcolor}
\hypersetup{colorlinks=true,linkcolor=red,filecolor=green,      urlcolor=blue,citecolor=Cerulean}
\usepackage{amssymb}
\usepackage{lipsum}
\usepackage{mathtools}
\usepackage{cuted}
\usepackage{multicol}
\usepackage{amsmath}      
\usepackage{array}    
\usepackage{framed} 
\usepackage[titletoc]{appendix}
\usepackage{amsthm}
\usepackage[figuresright]{rotating}
\usepackage{float}
\usepackage{soul}
\usepackage{algorithmicx}
\usepackage{algpseudocode}
\bibliographystyle{unsrtnat}
\usepackage{geometry}
\geometry{left=1.3cm,right=1.3cm}
\usepackage{fancyhdr}
\pagestyle{plain}
\usepackage{flushend}
\graphicspath{{figures/}}
\usepackage{multirow} 
\usepackage{booktabs} 
\usepackage{caption}
\usepackage{epstopdf}

\usepackage{graphics}
\usepackage{epsfig}
\usepackage[ruled,linesnumbered]{algorithm2e}
\newtheorem{theorem}{Theorem}

\newtheorem{corollary}{Corollary}
\newtheorem{assumption}{Assumption}
\newtheorem{remark}{Remark}
\newtheorem{definition}{Definition}
\UseRawInputEncoding

\def\tsc#1{\csdef{#1}{\textsc{\lowercase{#1}}\xspace}}
\tsc{WGM}
\tsc{QE}
\tsc{EP}
\tsc{PMS}
\tsc{BEC}
\tsc{DE}


\begin{document}
\let\WriteBookmarks\relax
\def\floatpagepagefraction{1}
\def\textpagefraction{.001}



\title [mode = title]{Nonlinear sparse variational Bayesian learning based model predictive control with application to PEMFC temperature control}                      



%
\author[1]{Qi Zhang}[type=editor,
     style=chinese,
 orcid=0000-0000-0000-0000,]



\ead{zhang.qi@zju.edu.cn}




\author[1]{Lei Wang}[style=chinese]

\author[1]{Weihua Xu}[style=chinese]

\author[1]{Hongye Su}[style=chinese]

\author[1]{Lei Xie}[style=chinese]
\cormark[1]
\ead{leix@iipc.zju.edu.cn}
\affiliation[1]{organization={College of Control Science and Engineering},
     addressline={Zhejiang University}, 
    city={Hangzhou},
    postcode={310027}, 
    country={China}}

\cortext[cor1]{Corresponding author}



\begin{abstract}
The accuracy of the underlying model predictions is crucial for the success of model predictive control (MPC) applications.
If the model is unable to accurately analyze the dynamics of the controlled system, the performance and stability guarantees provided by MPC may not be achieved.
Learning-based MPC can learn models from data, improving the applicability and reliability of MPC.
This study develops a nonlinear sparse variational Bayesian learning based MPC (NSVB-MPC) for nonlinear systems, where the model is learned by the developed NSVB method.
Variational inference is used by NSVB-MPC to assess the predictive accuracy and make the necessary corrections to quantify system uncertainty.
The suggested approach ensures input-to-state (ISS) and the feasibility of recursive constraints in accordance with the concept of an invariant terminal region.
Finally, a PEMFC temperature control model experiment confirms the effectiveness of the NSVB-MPC method.
\end{abstract}



\begin{keywords}
Model predictive control \sep Sparse Bayesian learning \sep Input-to-state stable \sep Nonlinear system \sep Variational Bayesian inference 
\end{keywords}

\maketitle

\section{Introduction}

Model Predictive Control (MPC) is an advanced control strategy that utilizes mathematical models to predict the future behavior of a system and compute optimal control inputs by repeatedly solving a finite-range optimization problem  \citep{rawlings2017model,cannonEfficientNonlinearModel2004,qinSurveyIndustrialModel2003}. 
The performance of MPC heavily relies on the accuracy of the model predictions, which are usually derived mathematically using a simplified physical modeling approach \citep{maciejowski2002predictive,gruneNonlinearModelPredictive2017a}. 
However, accurate modeling of complex systems based on first principles requires an in-depth understanding of the process and is often difficult or costly to apply in practice. 
In the absence of plant dynamics models, machine learning-based approaches can be used to learn these models from observed data \citep{breschiDatadrivenPredictiveControl2023,terziLearningbasedPredictiveControl2019a}. 
Learning-based MPC depends on two phases: model learning and controller design.
The aim of this approach is to improve model dynamics by incorporating data-driven components that reduce uncertainty. 
In recent years, machine learning techniques, such as neural networks \citep{bemporadTrainingRecurrentNeural2023a,chenNonlinearSystemIdentification1990} and Gaussian process (GP) regression \citep{kocijan2016modelling,seeger2004gaussian}, have demonstrated the ability to approximate nonlinear systems and have shown great potential for model learning.

However, measurement data from large-scale plants can become complex. Valuable information is often mixed with extraneous noise perturbations, and machine learning algorithms can suffer from overfitting and lack of interpretability  \citep{bruntonDiscoveringGoverningEquations2016a,kaiserSparseIdentificationNonlinear2018a}. 
Additionally, many systems must be identified and controlled with limited data, neural networks and deep learning require large amounts of training data  \citep{nubertSafeFastTracking2020}. 
GP is not suitable for real-time control due to the computational complexity of nonlinear optimisation \citep{maiwormOnlineLearningbasedModel2021}.
To address these challenges simultaneously, sparse Bayesian learning (SBL) methods provide a principled approach \citep{zhangDynamicFaultDetection2024,tipping2001sparse,wipfSparseBayesianLearning2004}. 
During the model learning phase, Bayesian inference utilizes prior knowledge and posterior probabilities to select the optimal model, providing a measure of uncertainty in parameter estimation. 
To further quantify parameter uncertainty, \citep{jacobsSparseBayesianNonlinear2018} developed a nonlinear autoregressive exogenous (NARX) model identification method based on SBL identification \citep{panSparseBayesianApproach2016a} with variational inference \citep{bealVariationalBayesianLearning2006,zhangVariationalBayesianState2023}.

Although learning-based MPC adjusts and compensates for uncertainty during the learning phase, the primary challenge is to guarantee the stability and robustness of closed-loop systems \citep{limonInputStateStability2006}. Robustness implementations can be classified into three main categories. 
The first category involves minimizing the nominal performance index by imposing strict state and terminal constraints  \citep{wuStatisticalMachinelearningBased2022,munozdelapenaLyapunovBasedModelPredictive2008}. 
The second category involves solving a Min-max optimization problem \citep{scokaertMinmaxFeedbackModel1998, yanRobustModelPredictive2014}. 
The third category involves computing the disturbance invariant sets to ensure that the closed-loop trajectory evolves within an invariant tube \citep{mayneTubebasedRobustNonlinear2011,aswaniProvablySafeRobust2013a,nguyenHighprobabilityStableGaussian2022a}.
Robust optimization involves constructing invariant sets that include possible realizations of uncertain parameters. 
The computation procedure for terminal regions offline is complex and cumbersome \citep{jiangInputtostateStabilityDiscretetime2001}. Therefore, the MPC scheme without terminal constraints is much simpler for design and implementation. This has led to the study of many approaches to predictive controllers without terminal constraints \citep{mullerEconomicModelPredictive2016}.
\citep{limonStabilityConstrainedMPC2006} suggest weighting the terminal costs to enlarge the domain of attraction of the MPC controller without terminal constraints, achieving the same domain of attraction as the MPC controller with terminal constraints.
According to the concept of invariant terminal regions  \citep{limonInputtoStateStabilityUnifying2009}, the control stage inherits the inherent robust stability of the nominal MPC under constraints if the system under the predictive controller proves to be input-to-state stability (ISS) with respect to predictive error.

This study develops a nonlinear sparse variational Bayesian learning-based MPC (NSVB-MPC) to improve data-driven control of nonlinear systems. 
In the learning phase, model selection is performed through sparsity-induced priors. 
By employing the method of automatic relevance determination (ARD), NSVB-MPC can iteratively prune redundant terms from the model.
In addition, on the basis of probability statistics, variational Bayes can not only model the uncertainty of model structure, noisy data, and system parameters, but also characterize the uncertainty of parameters as a probability distribution, which can improve the  estimation accuracy of model parameters. 
As a result, during NSVB-MPC learning, the uncertainty of the posterior probability representation will continue to decrease as the learned dataset increases  \citep{bishop2006pattern}.
The control phase addresses the robust stability of the nominal MPC under constraints. 
It is not necessary to increase the computational complexity associated with predictive control optimization when the terminal state constraints are introduced for stability reasons.
To verify the effectiveness of the NSVB-MPC method, we implement the PEMFC temperature control systems by simulations.

The primary contributions of this study can be outlined as follows:
\begin{itemize}
	\itemsep=0pt
	\item
	Nonlinear sparse variational Bayesian learning algorithms are developed for constructing polynomial input-output models.
	\item 
	The proposed approach constructs a sparse prior which naturally reduces the impact of noise and quantifies uncertainty through variational inference.
	\item
	A  robust predictive controller without terminal constraints is built based on a predictive model that avoids robust invariant region estimation and provides stability guarantees.
\end{itemize}

The subsequent sections of this paper are structured as follows: Section~\ref{sec:related} delves into the problem description, Section~\ref{sec:VB} covers the nonlinear variational Bayesian sparse learning method, Section~\ref{sec:mpc} discusses predictive control with stability, Section~\ref{sec:experiments} presents simulation results, and Section~\ref{sec:conclusions} provides concluding remarks.

\section{Problem Formulation} \label{sec:related}

The objective of this work is to effectively control an unknown discrete-time nonlinear system.
Suppose that the only information available to the plant is the collected data $\mathcal{D}$, which includes specific control inputs $ \boldsymbol{u} \in \mathbb{R}^{n_u}$ and measurement outputs $\boldsymbol{y}  \in \mathbb{R}^{n_y} $.
The system needs to be satisfied with hard constraints on the input and output sequences.
\begin{align}
	\boldsymbol{u}  \in \mathcal{U} ,\quad  \boldsymbol{y} \in \mathcal{Y}
\end{align}
where $\mathcal{U} \in \mathbb{R}^{n_u} $ and $\mathcal{Y}  \in \mathbb{R}^{n_y}$ are closed sets whose interior contains the origin. 
To maintain generality, it is assumed that the system equilibrium is located at the origin.

\begin{assumption}  \label{ass1}
	Given the limited availability of inputs and outputs, it is assumed that such a nonlinear dynamic system can be characterized through the utilization of the NARX model \citep{sjobergNonlinearBlackboxModeling1995}.
	\label{cor-1}
	The nonlinear dynamic system can be written as follows:
	\begin{align} \label{2}
		\begin{split}
			{y}_{k+1}=&{f}(\boldsymbol{x}_k,{u}_k)+{\xi}_{k} \\
		\end{split}
	\end{align}
	where ${f}(\cdot)$ is the unknown nonlinear function.  The horizons $n_a$ and $n_b$ denote the model order. $\boldsymbol{x}_k$ is the state vector. $\boldsymbol{x}_k =[ {y}_k, \cdots , {y}_{k-n_a}, {u}_{k-1}, \cdots {u}_{k-n_b}] \in \mathcal{X}:=\mathcal{Y}^{n_a+1} \times \mathcal{U}^{n_b} \subseteq 
	\mathbb{R}^{n_x} $ with $n_x=n_a+n_b+1$ . 
	${\xi}_{k}$  is a zero mean Gaussian distributed white noise with precision parameter $\beta$.
	For the sake of simplicity in notation, the inputs of the function ${f}(\cdot)$ are consolidated into a unified vector
	$\boldsymbol{z}_k:=(\boldsymbol{x}_k,{u}_k)  \in \mathbb{R}^{ n_z}$ with the order $n_z=n_y+n_u$, which is referred to as regressor.
\end{assumption}

\begin{assumption} 
	The assumption is made that the reference point is an equilibrium that complies with the constraints, such that $\boldsymbol{y}_r \in \mathcal{Y}$ and $\boldsymbol{u}_r \in \mathcal{U}$.
	\label{cor-2}
\end{assumption}

To calculate a control action using the input-output training data $\mathcal{D}$, the current output measurement $\boldsymbol{y}_k$ and reference $\boldsymbol{y}_r$, an output feedback control law is designed as
\begin{align}
	\boldsymbol{u}_k=\kappa_{f}(\boldsymbol{y}_k,\boldsymbol{y}_r;\mathcal{D}) .
\end{align}

\begin{definition}
	The nonlinear function ${f}(\cdot)$ described in \eqref{2} can be written as a generic set of observation vectors $\boldsymbol{\Phi}$, also known as a dictionary, which allows the learning task to be simplified and the model complexity to be reduced.
	${f}(\cdot)$ can be broken down into a sum of $M$ basis functions, each weighted accordingly \citep{jacobsSparseBayesianNonlinear2018,panSparseBayesianApproach2016a}
	\begin{align} \label{4}
		{f}\left( \boldsymbol{z}_k \right) =\sum \limits _{i=1}^M  \boldsymbol{\phi} _m (\boldsymbol{z}_{k}) \omega _m
		= {\Phi}_{k} \boldsymbol{\omega }
	\end{align}
	where  
	${\Phi}_{k}=\left[ {\phi} _1 ( \boldsymbol{z}_{k} ) , {\phi} _2 ( \boldsymbol{z}_{k} ) , \cdots , {\phi} _M (\boldsymbol{z}_{k} )\right] \in \mathbb{R}^{ M} $ is the $k$th row of the matrix $\boldsymbol{\Phi} $ , 
	$\boldsymbol{\Phi}=[\Phi^{\top}_{1}  , \Phi^{\top}_{2} , \cdots ,\Phi^{\top}_{N}      ]^{\top} 
	\in \mathbb{R}^{N \times M}$,
	$\boldsymbol{\omega}=\left[ {\omega} _1 , {\omega} _2   , \cdots , {\omega} _M \right]^{\top} \in \mathbb{R}^{M}$ is the unknown coefficient vector.
	\label{thm-1}
\end{definition}

\begin{remark}
	For the nonlinear system model \eqref{2}, the uncertainty comes mainly from the effect of noise ${\xi}_{k}$. The predicted $y_k$ for a given input $\boldsymbol{u}_k$ is uncertain due to the disturbance caused by noise in the observed data.
	Probability theory offers a sensible way for quantifying and calculating uncertainty. It aims to model the predictive distribution $p(y_k|\boldsymbol{z})$ as it expresses uncertainty regarding the output $\boldsymbol{y}$ for each value of regressor $\boldsymbol{z}$ \citep{bishop2006pattern}.
\end{remark}

\section{Nonlinear Sparse Variational Bayesian Learning} \label{sec:VB}

In Bayesian inference, all unknowns are treated as stochastic variables characterized by specific probability distributions,
under the noise ${\xi} \sim \mathcal{N}(0,\beta^{-1})$,
the likelihood function for the NARX model given in \eqref{2} with $\boldsymbol{y}$ is 
\begin{align} \label{5}
	\begin{split}			
		&p(\boldsymbol{y}|\boldsymbol{\Phi},\boldsymbol{\omega },\beta )	=\prod\limits_{\text{k}=1}^{N}{\mathcal{N}\left( {{{y}}_{k}}|{{\boldsymbol{\omega }}^{\top }}{\Phi}_k ,{{\beta }^{-1}} \right)} \\ 
		& ={{\left( \frac{\beta }{2\pi } \right)}^{\frac{N}{2}}}\text{exp}\left\{ -\frac{\beta }{2}\sum\limits_{k=1}^{N}{{{\left[ {{ {y}}_{k}}-{{\boldsymbol{\omega }}^{\top }}{\Phi}_k \right]}^{2}}} \right\} .
	\end{split}
\end{align}

The $\boldsymbol{\omega}$ is given as a Gaussian prior
\begin{align} \label{6}
	p(\boldsymbol{\omega}|\boldsymbol{\alpha}) =\prod\limits_{\text{i}=1}^{M}{\mathcal{N}\left( {\omega_m}|0,{{\alpha}_m^{-1}} \right)}
\end{align}
where  $\boldsymbol{\alpha}=(\alpha_1,\alpha_2,\cdots,\alpha_M)=diag(\boldsymbol{A})$ denotes the vector of hyperparameters,
and the prior distribution of the hyperparameter $\alpha$ is introduced, i.e., the Gaussian prior accuracy of the weights
\begin{align} \label{7}
	p(\alpha_m) =\text{Gam}(\alpha_m|a_0,b_0)
\end{align}
where $\text{Gam}$ denotes the gamma distribution, $a_0$ and $b_0$ denote hyperparameters.

Similarly, a prior over the inverse noise variance $\beta$ in \eqref{5} is introduced
\begin{align}  \label{8}
	p(\beta)=\text{Gam}(\beta|c_0,d_0).
\end{align}
where $c_0$ and $d_0$ denote hyperparameters.

Referring to the likelihood function in \eqref{5}, the parameters of the model that need to be learned are denoted as $\boldsymbol{\Theta}=( \boldsymbol{\omega}, \boldsymbol{\alpha}, \beta)$ given the $\mathcal{D}$.
	For the measurement output $\boldsymbol{y}$ and the parameters $\boldsymbol{\Theta}$, with the Bayesian rule, the posterior distribution $p(\boldsymbol{\Theta}|\boldsymbol{y})$ takes the following form
	\begin{align} \label{9}
		p(\boldsymbol{\Theta}|\boldsymbol{y})=\frac{p(\boldsymbol{\Theta},\boldsymbol{y})}{p(\boldsymbol{y})} = \frac{p(\boldsymbol{y}|\boldsymbol{\Phi},\boldsymbol{\Theta})p(\boldsymbol{\Theta})}{p(\boldsymbol{y})} 
	\end{align}
	where $p(\boldsymbol{\Theta},\boldsymbol{y})$ is the joint distribution.
	$p(\boldsymbol{y}|\boldsymbol{\Phi},\boldsymbol{\Theta})$ indicates the likelihood function, which is the probability that $\boldsymbol{y}$ is observed given $\boldsymbol{\Theta}$. $p(\boldsymbol{\Theta})$ is the prior distribution of the parameters before observing the data. $p(\boldsymbol{y})$ is the marginal distribution, which can be obtained through integrating over $\boldsymbol{\Theta}$
	\begin{align} 
		p(\boldsymbol{y})=\int_{\boldsymbol{\Theta}}p(\boldsymbol{\Theta},\boldsymbol{y})d\boldsymbol{\Theta}  .
	\end{align}

	In \eqref{9}, the model parameters $\boldsymbol{\Theta}$ given the $\mathcal{D}$ now incorporate the hyperparameter $\boldsymbol{\alpha}$, which makes obtaining an analytical solution for the marginal distribution $p(\boldsymbol{y})$ challenging. 
	Hence, a fundamental challenge in Bayesian methods lies in replacing the computationally intractable true posterior distribution $p(\boldsymbol{\Theta}|\boldsymbol{y})$ with a simpler, analytically tractable distribution.
	Commonly used methods include random sampling \citep{gilks1995markov}, but this method leads to higher computational costs.
	Therefore, this work employs variational inference to approximate the posterior distribution $p(\boldsymbol{\Theta}|\boldsymbol{y})$ using a series of closed-form update equations. 
	The goal of variational inference is to find an approximate distribution $Q(\boldsymbol{\Theta})$ that can replace the true posterior distribution $p(\boldsymbol{\Theta}|\boldsymbol{y})$. 
	To achieve highly accurate approximations of the true posterior distribution $p(\boldsymbol{\Theta}|\boldsymbol{y})$, it is essential to establish Evidence Lower Bound (ELBO) $\mathcal{L}(Q(\boldsymbol{\Theta}))$ for inference, ensuring the optimality of $\mathcal{L}(Q(\boldsymbol{\Theta}))$.

	\begin{theorem} \label{th1}
		For the approximate distribution $Q(\boldsymbol{\Theta})$ and the log-likelihood function of the marginal distribution $\ln p(\boldsymbol{y})$, 
		if there is ELBO $\mathcal{L}(Q(\boldsymbol{\Theta}))$ for variational inference, then the relationship between log-likelihood function $\ln p(\boldsymbol{y})$ satisfies
		\begin{align}
			\ln p(\boldsymbol{y})
			\geq \mathcal{L}(Q(\boldsymbol{\Theta})).
		\end{align}
	\end{theorem}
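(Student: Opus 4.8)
The plan is to prove the bound by decomposing the log marginal likelihood $\ln p(\boldsymbol{y})$ into the sum of the ELBO and a Kullback--Leibler (KL) divergence term, and then invoking the non-negativity of the latter. First I would fix the functional
\begin{align}
	\mathcal{L}(Q(\boldsymbol{\Theta})) = \int Q(\boldsymbol{\Theta}) \ln \frac{p(\boldsymbol{\Theta},\boldsymbol{y})}{Q(\boldsymbol{\Theta})} \, d\boldsymbol{\Theta},
\end{align}
where $p(\boldsymbol{\Theta},\boldsymbol{y})$ is the joint distribution introduced in \eqref{9} and $Q(\boldsymbol{\Theta})$ is any normalized approximating density, so that the claim becomes a statement about this particular functional.

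Next I would start from the elementary identity $\ln p(\boldsymbol{y}) = \ln p(\boldsymbol{\Theta},\boldsymbol{y}) - \ln p(\boldsymbol{\Theta}|\boldsymbol{y})$, which is just the Bayes rule \eqref{9} after taking logarithms. Since the left-hand side does not depend on $\boldsymbol{\Theta}$, I can multiply through by $Q(\boldsymbol{\Theta})$ and integrate; because $\int Q(\boldsymbol{\Theta}) \, d\boldsymbol{\Theta} = 1$, the left side is unchanged. Inserting $\ln Q(\boldsymbol{\Theta})$ into each term on the right and regrouping then yields the exact decomposition
\begin{align}
	\ln p(\boldsymbol{y}) = \mathcal{L}(Q(\boldsymbol{\Theta})) + \mathrm{KL}\!\left(Q(\boldsymbol{\Theta}) \,\|\, p(\boldsymbol{\Theta}|\boldsymbol{y})\right),
\end{align}
in which the KL term equals $-\int Q(\boldsymbol{\Theta}) \ln \frac{p(\boldsymbol{\Theta}|\boldsymbol{y})}{Q(\boldsymbol{\Theta})} \, d\boldsymbol{\Theta}$.

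The final step is to establish $\mathrm{KL}(Q \,\|\, p) \geq 0$, which I would obtain from Jensen's inequality applied to the concave logarithm: $\int Q \ln \frac{p}{Q} \, d\boldsymbol{\Theta} \leq \ln \int Q \cdot \frac{p}{Q} \, d\boldsymbol{\Theta} = \ln 1 = 0$, so the KL divergence is non-negative, with equality precisely when $Q = p(\boldsymbol{\Theta}|\boldsymbol{y})$ almost everywhere. Discarding this non-negative term from the decomposition immediately gives $\ln p(\boldsymbol{y}) \geq \mathcal{L}(Q(\boldsymbol{\Theta}))$, as asserted.

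I expect the only genuine subtlety to be measure-theoretic bookkeeping rather than any deep difficulty. One must assume $Q(\boldsymbol{\Theta})$ is absolutely continuous with respect to the posterior $p(\boldsymbol{\Theta}|\boldsymbol{y})$ so that the log-ratio integrands are well defined, and one must justify the interchange of the logarithm and the integral that underlies the Jensen step. Beyond these hypotheses the argument is purely algebraic rearrangement, so the bound follows essentially at once once the decomposition is in place; the substantive content of the method lies not in this inequality but in the subsequent factorized optimization of $\mathcal{L}(Q(\boldsymbol{\Theta}))$.
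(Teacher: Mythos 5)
Your proposal is correct and follows essentially the same route as the paper: both derive the exact decomposition $\ln p(\boldsymbol{y}) = \mathcal{L}(Q(\boldsymbol{\Theta})) + \mathbb{KL}\left( Q(\boldsymbol{\Theta})\,\|\,p(\boldsymbol{\Theta}|\boldsymbol{y})\right)$ and then drop the non-negative KL term. The only difference is cosmetic --- you reach the decomposition via the logarithmic Bayes identity and justify $\mathbb{KL}\geq 0$ explicitly by Jensen's inequality, whereas the paper inserts $Q(\boldsymbol{\Theta})/Q(\boldsymbol{\Theta})$ inside the marginal integral and simply cites non-negativity of the KL divergence.
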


\begin{proof}
		Introducing the approximate distribution $ Q(\boldsymbol{\Theta})$, the log-likelihood function $\ln p(\boldsymbol{y})$ takes the following form
		\begin{align} \label{12}
			\begin{split}
				\ln p(\boldsymbol{y})
				&=\ln \int_{\boldsymbol{\Theta}} Q(\boldsymbol{\Theta})  \left\lbrace \frac{p(\boldsymbol{\Theta},\boldsymbol{y})}{Q(\boldsymbol{\Theta})} \right\rbrace d \boldsymbol{\Theta} \\
				&= \int_{\boldsymbol{\Theta}}Q(\boldsymbol{\Theta}) \ln \left\lbrace \frac{p(\boldsymbol{\Theta},\boldsymbol{y})}{Q(\boldsymbol{\Theta})} \right\rbrace d \boldsymbol{\Theta}
				\\
				&-\int_{\boldsymbol{\Theta}}Q(\boldsymbol{\Theta}) \ln \left\lbrace \frac{p(\boldsymbol{\Theta}|\boldsymbol{y})}{Q(\boldsymbol{\Theta})} \right\rbrace d \boldsymbol{\Theta}\\
				&=\mathcal{L}(Q(\boldsymbol{\Theta}))+\mathbb{KL}\left( Q(\boldsymbol{\Theta})||p(\boldsymbol{\Theta}|\boldsymbol{y})\right)
			\end{split}
		\end{align}
		where $\int_{\boldsymbol{\Theta}}Q(\boldsymbol{\Theta}) \ln \left\lbrace \frac{p(\boldsymbol{\Theta},\boldsymbol{y})}{Q(\boldsymbol{\Theta})} \right\rbrace d \boldsymbol{\Theta}$ denotes the ELBO $\mathcal{L}(Q(\boldsymbol{\Theta}))$, $-\int_{\boldsymbol{\Theta}}Q(\boldsymbol{\Theta}) \ln \left\lbrace \frac{p(\boldsymbol{\Theta}|\boldsymbol{y})}{Q(\boldsymbol{\Theta})} \right\rbrace d \boldsymbol{\Theta}$ is the Kullback-Leibler (KL) divergence $\mathbb{KL}\left( Q(\boldsymbol{\Theta})||p(\boldsymbol{\Theta}|\boldsymbol{y})\right)$.
		$\mathbb{KL}(Q(\boldsymbol{\Theta})||p(\boldsymbol{\Theta}|\boldsymbol{y}))$ quantifies the KL divergence between
		$Q(\boldsymbol{\Theta})$ and the posterior distribution $p(\boldsymbol{\Theta}|\boldsymbol{y})$. Since KL divergence $\mathbb{KL}(Q(\boldsymbol{\Theta})||p(\boldsymbol{\Theta}|\boldsymbol{y}))$ has non-negativity,  we know $\ln p(\boldsymbol{y})
		\geq \mathcal{L}(Q(\boldsymbol{\Theta}))$.
	
\end{proof}

	Applying Theorem \ref{th1}, the log-likelihood function $\ln p(\boldsymbol{y})$ can be updated by $\mathcal{L}(Q(\boldsymbol{\Theta}))$.
	The objective of minimizing $\mathbb{KL}(Q(\boldsymbol{\Theta})||p(\boldsymbol{\Theta}|\boldsymbol{y}))$ is equivalent to maximizing $\mathcal{L}(Q(\boldsymbol{\Theta}))$. 
	By maximizing  $\mathcal{L}(Q(\boldsymbol{\Theta}))$, the optimal approximate distribution $Q(\boldsymbol{\Theta})$ for the marginal distribution $p(\boldsymbol{\Theta}|\boldsymbol{y})$ can be found. 
	When $Q(\boldsymbol{\Theta})$ equals $p(\boldsymbol{\Theta}|\boldsymbol{y})$, the KL divergence becomes negligible, resulting in an exact match between $Q(\boldsymbol{\Theta})$ and $p(\boldsymbol{\Theta}|\boldsymbol{y})$ \citep{kullbackInformationSufficiency1951}.

	However, the approximate distribution $Q(\boldsymbol{\Theta})$ encompasses all the model parameters, which makes the analytical solution still challenging to obtain. If the approximate distribution can be expressed as an independent superposition of each variable, then the integral of $Q(\boldsymbol{\Theta})$ can be decomposed into a superposition of multiple factor distribution $q_j(\Theta_j)$. This decomposition enables closed-form updating equations for each factor.

	\begin{assumption} \label{qd}
		It is assumed that the approximate distribution $Q(\boldsymbol{\Theta})$ can be decomposed into an independent superposition of each factor distribution $q_j(\Theta_j)$
		\begin{align}
			Q(\boldsymbol{\Theta}) =  \prod^{I} _ { i } q _ { i } ( \Theta _ { i } )
		\end{align}
	\end{assumption}

	With this decomposition, 
	the objective of maximizing $\mathcal{L}(Q(\boldsymbol{\Theta}))$ becomes more straightforward.
	It can be accomplished by optimizing each factor distribution individually with respect to all other distributions.

	\begin{theorem} \label{th2}
		Let Assumption \ref{qd} be satisfied, the optimal variational distribution $q^{\ast}_j(\Theta_j)$ for the factor distribution $q_j(\Theta_j)$ satisfies the following
		\begin{align} \label{12a}
			q^{\ast}_j(\Theta_j)=\frac{ \exp \mathbb{E}_{i\neq j}(\ln p(\boldsymbol{y},\boldsymbol{\Theta})) }{\int_{\Theta_j} \exp \left(  \mathbb{E}_{i\neq j}(\ln p(\boldsymbol{y},\boldsymbol{\Theta}))  \right) } .
		\end{align}
	\end{theorem}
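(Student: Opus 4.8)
The plan is to follow the standard mean-field argument: substitute the factorization, isolate the dependence of the ELBO on a single factor $q_j(\Theta_j)$, and recognize the resulting functional as a negative Kullback--Leibler divergence so that its maximizer can be read off directly. First I would insert the factorization from Assumption~\ref{qd}, $Q(\boldsymbol{\Theta})=\prod_i q_i(\Theta_i)$, into $\mathcal{L}(Q(\boldsymbol{\Theta}))=\int_{\boldsymbol{\Theta}} Q(\boldsymbol{\Theta})\ln\{p(\boldsymbol{\Theta},\boldsymbol{y})/Q(\boldsymbol{\Theta})\}\,d\boldsymbol{\Theta}$ and split the logarithm into $\ln p(\boldsymbol{y},\boldsymbol{\Theta})-\sum_i \ln q_i(\Theta_i)$.

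Next I would single out the target factor $q_j$ and integrate over the remaining variables $\Theta_{i\neq j}$. Because each $q_i$ is normalized, $\int q_i\,d\Theta_i=1$, every entropy contribution $\int q_i\ln q_i\,d\Theta_i$ with $i\neq j$ is independent of $q_j$ and is absorbed into an additive constant, while the coupling term collapses to $\int q_j(\Theta_j)\,\mathbb{E}_{i\neq j}[\ln p(\boldsymbol{y},\boldsymbol{\Theta})]\,d\Theta_j$, where $\mathbb{E}_{i\neq j}[\cdot]$ is the expectation taken with respect to $\prod_{i\neq j}q_i$. I then define a tilted density $\tilde{p}(\boldsymbol{y},\Theta_j)$ through $\ln \tilde{p}(\boldsymbol{y},\Theta_j)=\mathbb{E}_{i\neq j}[\ln p(\boldsymbol{y},\boldsymbol{\Theta})]+\mathrm{const}$, normalized over $\Theta_j$, whose normalizer is precisely the denominator appearing in the statement. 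With this definition the ELBO reduces, up to a constant, to
\begin{align}
\mathcal{L}(Q(\boldsymbol{\Theta}))=-\mathbb{KL}\!\left(q_j(\Theta_j)\,\|\,\tilde{p}(\boldsymbol{y},\Theta_j)\right)+\mathrm{const}.
\end{align}

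Maximizing $\mathcal{L}$ over $q_j$ with the other factors fixed is thus equivalent to minimizing this divergence. Invoking the non-negativity of the KL divergence already used in the proof of Theorem~\ref{th1}, the minimal value zero is attained exactly when $q_j(\Theta_j)=\tilde{p}(\boldsymbol{y},\Theta_j)$; writing out the normalized $\tilde{p}$ yields the claimed optimum
\begin{align}
q^{\ast}_j(\Theta_j)=\frac{\exp\mathbb{E}_{i\neq j}(\ln p(\boldsymbol{y},\boldsymbol{\Theta}))}{\int_{\Theta_j}\exp\left(\mathbb{E}_{i\neq j}(\ln p(\boldsymbol{y},\boldsymbol{\Theta}))\right)}.
\end{align}

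I expect the main obstacle to be the bookkeeping in the isolation step: cleanly separating the $q_j$-dependent pieces from the constant and justifying that the tilted density $\tilde{p}$ is well defined, i.e. that its normalizing integral converges and yields a proper distribution. A secondary subtlety worth noting is that the fixed point is only \emph{implicit}, since $\mathbb{E}_{i\neq j}$ still depends on the other factors; the KL argument establishes single-factor optimality, while a genuine optimum of the full ELBO emerges only when these updates are applied cyclically.
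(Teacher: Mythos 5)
Your proposal is correct and follows essentially the same route as the paper: substitute the mean-field factorization into the ELBO, absorb the $q_{i\neq j}$ entropy terms into a constant, recognize the remaining functional as a negative KL divergence, and read off the maximizer by non-negativity of KL. Your explicit introduction of the normalized tilted density $\tilde{p}$ is in fact a slightly cleaner formulation than the paper's, which phrases the KL divergence directly against the unnormalized quantity $\mathbb{E}_{i\neq j}(\ln p(\boldsymbol{y},\boldsymbol{\Theta}))$.
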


\begin{proof}
		With the Assumption \ref{qd},
		$\mathcal{L}(Q(\boldsymbol{\Theta}))$ in \eqref{12} can be decomposed into
		\begin{align} \label{14}
			\begin{split}
				&\mathcal{L}(Q(\boldsymbol{\Theta}))
				=\int_{\Theta_j} \prod _ { i } q_i({\Theta_i}) \ln \left\{ \frac { p ( \boldsymbol{\Theta} , \boldsymbol{y} ) } { \prod _ { i } q_i({\Theta_i}) } \right\} d \boldsymbol{\Theta}_i  \\
				&=\int_{\Theta_j}q_j({\Theta_j})\int_{\Theta_i,i\neq j}\prod \limits_{\text{i}=1}q_i ({\Theta_i}) \ln p(\boldsymbol{y},\boldsymbol{\Theta})d\Theta_i d\Theta_j  \\&-\int_{\Theta_j}q_j({\Theta_j})\int_{\Theta_i,i\neq j}\prod \limits_{\text{i}=1}q_i ({\Theta_i}) 
				\ln q_j(\Theta_j)d\Theta_i d\Theta_j \\
				&+\text{const}  \\
			\end{split}
		\end{align}
		where $\text{const}$  is a constant term, independent of the parameter $\Theta_j$.

		Eliminating the term independent of the parameter $\Theta_j$, the second term in the right-hand side of \eqref{14} can be simplified as
		\begin{align} \label{15}
			\begin{split}
\int_{\Theta_j}q_j({\Theta_j})\int_{\Theta_i,i\neq j}\prod \limits_{\text{i}=1}q_i ({\Theta_i}) 
				\ln q_j(\Theta)d\Theta_i d\Theta_j
				\\
				= \int_{\Theta_j}q_j({\Theta_j}) \ln q_j(\Theta_j) d\Theta_j.
			\end{split}
		\end{align}

		Substituting \eqref{15} into \eqref{14}, $\mathcal{L}(Q(\boldsymbol{\Theta}))$ can be further simplified as
		\begin{align} \label{lq1}
			\begin{split}
				&\mathcal{L}(Q(\boldsymbol{\Theta}))
				= \text{const}-\int_{\Theta_j}q_j({\Theta_j}) \ln q_j(\Theta_j) d\Theta_j \\
				&+\int_{\Theta_j}q_j({\Theta_j})\int_{\Theta_i,i\neq j}\prod \limits_{\text{i}=1}q_i ({\Theta_i}) \ln p(\boldsymbol{y},\boldsymbol{\Theta})d\Theta_i d\Theta_j
			\end{split}
		\end{align}
		where $\int_{\Theta_i,i\neq j}\prod \limits_{\text{i}=1}q_i ({\Theta_i}) \ln p(\boldsymbol{y},\boldsymbol{\Theta})d\Theta_i$ is the expectation of the log-joint distribution such that 
		\begin{align} \label{16}
			\int_{\Theta_i,i\neq j}\prod \limits_{\text{i}=1}q_i ({\Theta_i}) \ln p(\boldsymbol{y},\boldsymbol{\Theta})d\Theta_i = \mathbb{E}_{i\neq j}(\ln p(\boldsymbol{y},\boldsymbol{\Theta}))
		\end{align}
		where $\mathbb{E}(\cdot)$ is the expectation, $\mathbb{E}_{i\neq j}$  denotes the expectation considering the distributions $q$ over all variables within the  $\boldsymbol{\Theta}$ for which $i \neq j$.

		Following \eqref{16}, $\mathcal{L}(Q(\boldsymbol{\Theta}))$ in \eqref{lq1} can be simplified as follows
		\begin{align} \label{17}
			\begin{split}
				\mathcal{L}(Q(\boldsymbol{\Theta}))&=\int_{\Theta_j}q_j(\Theta_j)\mathbb{E}_{i\neq j}(\ln p(\boldsymbol{y},\boldsymbol{\Theta}))d\Theta_j\\
				&-\int_{\Theta_j}q_j(\Theta_j) \ln q_j(\Theta_j)d\Theta_j + \text{const} 
			\end{split}
		\end{align}

		Referring to the KL divergence defined in \eqref{12}, it is known that the right-hand side of \eqref{17} is equivalent to a negative KL divergence between $q_j(\Theta_j)$ and $\mathbb{E}_{i\neq j}(\ln p(\boldsymbol{y},\boldsymbol{\Theta}))$, $\mathcal{L}(Q(\boldsymbol{\Theta}))$ in \eqref{17} can be rewritten as
		\begin{align}
			\begin{split}
				\mathcal{L}(Q(\boldsymbol{\Theta}))&=-\mathbb{KL}(q_j(\Theta_j)||\mathbb{E}_{i\neq j}(\ln p(\boldsymbol{y},\boldsymbol{\Theta}))+\text{const} 
			\end{split}
		\end{align}

		Thus maximizing $\mathcal{L}(Q(\boldsymbol{\Theta}))$ is equivalent to minimizing the KL divergence between $q_j(\Theta_j)$ and $\mathbb{E}_{i\neq j}(\ln p(\boldsymbol{y},\boldsymbol{\Theta}))$, and the minimum occurs when $q_j(\Theta_j)$ equals $\mathbb{E}_{i\neq j}(\ln p(\boldsymbol{y},\boldsymbol{\Theta}))$.
		Define the optimal variational distribution $q^{\ast}_j(\Theta_j)$ of the factor distribution $q_j(\Theta_j)$ to approximate the expectation $\mathbb{E}_{i\neq j}(\ln p(\boldsymbol{y},\boldsymbol{\Theta}))$. The logarithmic form of the optimal variational distribution  $q^{\ast}_j(\Theta_j)$ given by
		\begin{align} \label{18}
			\ln q^{\ast}_j(\Theta_j)=\mathbb{E}_{i\neq j}(\ln p(\boldsymbol{y},\boldsymbol{\Theta})) + \text{const}.
		\end{align}

		The additive constant in \eqref{18} is set by normalizing the optimal variational distribution $q^{\ast}_j(\Theta_j)$.
		Thus, taking the exponential of both sides of \eqref{18} and normalize,  $q^{\ast}_j(\Theta_j)$ given by
		\begin{align}
			q^{\ast}_j(\Theta_j)=\frac{ \exp \mathbb{E}_{i\neq j}(\ln p(\boldsymbol{y},\boldsymbol{\Theta})) }{\int_{\Theta_j} \exp \left(  \mathbb{E}_{i\neq j}(\ln p(\boldsymbol{y},\boldsymbol{\Theta}))  \right) } .
		\end{align}

\end{proof}

	With the Theorem \ref{th2}, performing the update of \eqref{12a} for each factor distribution $q_i(\Theta_i)$  completes one optimization step of $\mathcal{L}(Q(\boldsymbol{\Theta}))$ . These equations can be iteratively updated sequentially by using the statistics of $q^{\ast}_j(\Theta_j)$ from the previous iteration, resulting in an expectation-maximization (EM)-like algorithm.

	Review the NARX model in \eqref{5}. The parameter to be learned is $\boldsymbol{\Theta}=( \boldsymbol{\omega}, \boldsymbol{\alpha}, \beta)$. According to the Bayesian principle, the parameter is obtained by calculating the posterior distribution $p(\boldsymbol{\Theta}|\boldsymbol{y})$. 
	According to Theorem \ref{th1}, the computation of the posterior distribution $p(\boldsymbol{\Theta}|\boldsymbol{y})$ can be approximated by an approximate distribution $Q(\boldsymbol{\Theta})$. 
	Therefore, the learning of the model parameters is actually achieved by optimizing this approximate distribution $Q(\boldsymbol{\Theta})$. 
	Furthermore, according to Theorem \ref{th2}, the purpose of optimizing the approximate distribution $Q(\boldsymbol{\Theta})$ can be achieved by optimizing each factor distribution $q_j(\Theta_j)$, thus completing the learning of model parameters
	\begin{align}
		\begin{split}
	p(\boldsymbol{\Theta}|\boldsymbol{y})&= p(\boldsymbol{\omega},\boldsymbol{\alpha},\beta|\boldsymbol{y}) \simeq Q(\boldsymbol{\omega},\boldsymbol{\alpha},\beta) \\
			&= 	{{Q}_{\boldsymbol{\omega}}}\left( \boldsymbol{\omega} \right) 	{{Q}_{\boldsymbol{\alpha}}}\left( \boldsymbol{\alpha} \right) {Q}_{\beta}\left( \beta \right).
		\end{split}
	\end{align}

	Applying the conclusion of Theorem \ref{th2}, the optimal variational distribution $q^{\ast}_j(\Theta_j)$ will be computed by a series of closed-form update equations through \eqref{18}. 
	The optimal variational distribution for ${{Q}_{\boldsymbol{\omega}}}\left( \boldsymbol{\omega} \right)$, 	${{Q}_{\boldsymbol{\alpha}}}\left( \boldsymbol{\alpha} \right)$ and ${Q}_{\beta}\left( \beta \right)$ will be obtained by the following three corollaries, respectively.

\begin{corollary} \label{co1}
	Following the Theorem \ref{th2}, ${{Q}^{\ast}_{\boldsymbol{\omega}}}\left( \boldsymbol{\omega} \right)$ is a Gaussian distribution
	\begin{align} \label{26}
		{{Q}^{\ast}_{\boldsymbol{\omega}}}\left( \boldsymbol{\omega} \right)=\mathcal{N}(\boldsymbol{\omega}|\boldsymbol{\mu}_{\boldsymbol{\omega}},\boldsymbol{\Sigma}_{\boldsymbol{\omega}})
	\end{align}
	where $\boldsymbol{\Sigma}_{\boldsymbol{\omega}}= \left( \operatorname{diag} \mathbb{E}_{\boldsymbol{\alpha}}[  \alpha_m  ] + \mathbb{E}_{\beta}[\beta] \sum\limits_{k=1}^{N} {\Phi}^{\top}_k {\Phi}_k \right) ^{-1}$, $\boldsymbol{\mu}_{\boldsymbol{\omega}} = \mathbb{E}_{\beta}[\beta] \boldsymbol{\Sigma}_{\boldsymbol{\omega}}\sum\limits_{k=1}^{N} {\Phi}^{\top}_k\boldsymbol{y}$.
\end{corollary}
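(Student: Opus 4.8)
The plan is to invoke Theorem~\ref{th2} directly: the optimal factor $q^{\ast}_{\boldsymbol{\omega}}(\boldsymbol{\omega})$ is characterized in logarithmic form by \eqref{18}, so I would start from
\begin{align}
	\ln q^{\ast}_{\boldsymbol{\omega}}(\boldsymbol{\omega}) = \mathbb{E}_{\boldsymbol{\alpha},\beta}\bigl(\ln p(\boldsymbol{y},\boldsymbol{\Theta})\bigr) + \text{const},
\end{align}
where the expectation is taken over the remaining factors $Q_{\boldsymbol{\alpha}}$ and $Q_{\beta}$ under the mean-field decomposition of Assumption~\ref{qd}. Because the joint distribution factorizes as $p(\boldsymbol{y},\boldsymbol{\Theta}) = p(\boldsymbol{y}|\boldsymbol{\Phi},\boldsymbol{\omega},\beta)\,p(\boldsymbol{\omega}|\boldsymbol{\alpha})\,p(\boldsymbol{\alpha})\,p(\beta)$, the first step is to expand $\ln p(\boldsymbol{y},\boldsymbol{\Theta})$ into this sum of log-terms and discard everything that does not depend on $\boldsymbol{\omega}$ (absorbing it into the additive constant). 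Only the likelihood \eqref{5} and the Gaussian prior \eqref{6} retain $\boldsymbol{\omega}$-dependence.

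Next I would substitute the explicit Gaussian forms. From \eqref{5} the $\boldsymbol{\omega}$-dependent part of the log-likelihood is $-\tfrac{\beta}{2}\sum_{k=1}^{N}\bigl(y_k-\boldsymbol{\omega}^{\top}\Phi_k\bigr)^2$, and from \eqref{6} the prior contributes $-\tfrac{1}{2}\boldsymbol{\omega}^{\top}\boldsymbol{A}\,\boldsymbol{\omega}$ with $\boldsymbol{A}=\operatorname{diag}(\boldsymbol{\alpha})$. Taking the expectation $\mathbb{E}_{\boldsymbol{\alpha},\beta}$ simply replaces $\beta$ by $\mathbb{E}_{\beta}[\beta]$ and each $\alpha_m$ by $\mathbb{E}_{\boldsymbol{\alpha}}[\alpha_m]$, since these parameters enter linearly and the mean-field factorization makes the expectations separable. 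Expanding the quadratic $(y_k-\boldsymbol{\omega}^{\top}\Phi_k)^2$ and collecting terms by order in $\boldsymbol{\omega}$ then gives
\begin{align}
	\ln q^{\ast}_{\boldsymbol{\omega}}(\boldsymbol{\omega}) = -\tfrac{1}{2}\boldsymbol{\omega}^{\top}\Bigl(\operatorname{diag}\mathbb{E}_{\boldsymbol{\alpha}}[\alpha_m]+\mathbb{E}_{\beta}[\beta]\textstyle\sum_{k=1}^{N}\Phi_k^{\top}\Phi_k\Bigr)\boldsymbol{\omega}+\mathbb{E}_{\beta}[\beta]\,\boldsymbol{\omega}^{\top}\textstyle\sum_{k=1}^{N}\Phi_k^{\top}y_k+\text{const}.
\end{align}

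Since the right-hand side is a quadratic form in $\boldsymbol{\omega}$, the distribution $q^{\ast}_{\boldsymbol{\omega}}(\boldsymbol{\omega})$ must be Gaussian, which establishes \eqref{26}. To read off the parameters I would match this expression against the canonical log-density $-\tfrac{1}{2}\boldsymbol{\omega}^{\top}\boldsymbol{\Sigma}_{\boldsymbol{\omega}}^{-1}\boldsymbol{\omega}+\boldsymbol{\omega}^{\top}\boldsymbol{\Sigma}_{\boldsymbol{\omega}}^{-1}\boldsymbol{\mu}_{\boldsymbol{\omega}}+\text{const}$: the coefficient of the quadratic term yields $\boldsymbol{\Sigma}_{\boldsymbol{\omega}}=\bigl(\operatorname{diag}\mathbb{E}_{\boldsymbol{\alpha}}[\alpha_m]+\mathbb{E}_{\beta}[\beta]\sum_{k=1}^{N}\Phi_k^{\top}\Phi_k\bigr)^{-1}$, and matching the linear term gives $\boldsymbol{\mu}_{\boldsymbol{\omega}}=\mathbb{E}_{\beta}[\beta]\,\boldsymbol{\Sigma}_{\boldsymbol{\omega}}\sum_{k=1}^{N}\Phi_k^{\top}y_k$, exactly as claimed. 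The main obstacle is not conceptual but bookkeeping: one must handle the row/column convention for $\Phi_k$ consistently so that $\sum_k\Phi_k^{\top}\Phi_k$ is the correct $M\times M$ matrix and $\sum_k\Phi_k^{\top}y_k$ the correct $M$-vector, and take care to push all $\boldsymbol{\omega}$-free contributions (including the normalization obtained by exponentiating and integrating) into the constant without corrupting the completed square.
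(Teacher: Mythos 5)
Your proposal is correct and follows essentially the same route as the paper's proof: apply \eqref{18} from Theorem~\ref{th2}, retain only the likelihood \eqref{5} and prior \eqref{6} terms that depend on $\boldsymbol{\omega}$, take the expectations over $\boldsymbol{\alpha}$ and $\beta$, and recognize the resulting quadratic form in $\boldsymbol{\omega}$ as a Gaussian whose precision and mean are read off by completing the square. Your writing $\sum_k\Phi_k^{\top}y_k$ for the linear term is, if anything, cleaner bookkeeping than the paper's $\sum_k\Phi_k^{\top}\boldsymbol{y}$.
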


\begin{proof}

		According to Theorem \ref{th2},  the logarithmic form of the optimal variational distribution $ {{Q}^{\ast}_{\boldsymbol{\omega}}}\left( \boldsymbol{\omega} \right) $ can be given by applying \eqref{18}:
		\begin{align}  
			\label{20}	\ln {{Q}^{\ast}_{\boldsymbol{\omega}}}\left( \boldsymbol{\omega} \right)&={{\mathbb{E}}_{\boldsymbol{\alpha},\beta }}\left[ \ln p\left( \mathbf{y},\boldsymbol{\alpha}, \beta ,\boldsymbol{\omega} \right) \right]+\operatorname { c o n s t }.
		\end{align}

		Only those terms that have some functional dependence on $\boldsymbol{\omega}$ need to be retained, all other terms can be absorbed into the normalization constant.
		$\ln {{Q}^{\ast}_{\boldsymbol{\omega}}}\left( \boldsymbol{\omega} \right)$ can be further simplified as
		\begin{align} \label{25}
			\begin{split}
				&\ln {{Q}^{\ast}_{\boldsymbol{\omega}}}\left( \boldsymbol{\omega} \right)
				={{\mathbb{E}}_{\boldsymbol{\alpha},\beta }}\left[ \ln p\left( \mathbf{y}|\boldsymbol{\Phi},\boldsymbol{\omega},\boldsymbol{\alpha}, \beta  \right) \right] \\
				&+ {{\mathbb{E}}_{\boldsymbol{\alpha},\beta }}\left[ \ln p\left( \boldsymbol{\omega}|\boldsymbol{\alpha}  \right) \right] +\operatorname { c o n s t }\\
				&=\ln p\left( \mathbf{y}|\boldsymbol{\Phi},\boldsymbol{\omega},\boldsymbol{\alpha}, \beta  \right) + {{\mathbb{E}}_{\boldsymbol{\alpha},\beta }}\left[ \ln p\left( \boldsymbol{\omega}|\boldsymbol{\alpha}  \right) \right] +\operatorname { c o n s t }\\
			\end{split}
		\end{align}
		where  $ p\left( \mathbf{y}|\boldsymbol{\Phi},\boldsymbol{\omega},\boldsymbol{\alpha}, \beta  \right)$ and $p\left( \boldsymbol{\omega}|\boldsymbol{\alpha}  \right)$ are calculated through \eqref{5} and \eqref{6}, respectively. 
		Expanding the logarithmic form of $ p\left( \mathbf{y}|\boldsymbol{\Phi},\boldsymbol{\omega},\boldsymbol{\alpha}, \beta  \right)$ and $p\left( \boldsymbol{\omega}|\boldsymbol{\alpha}  \right)$, $\ln {{Q}^{\ast}_{\boldsymbol{\omega}}}\left( \boldsymbol{\omega} \right)$ is given by
		\begin{align} \label{27a}
			\begin{split}
				&\ln {{Q}^{\ast}_{\boldsymbol{\omega}}}\left( \boldsymbol{\omega} \right)			=			\mathbb{E}_{\beta}[\beta] \boldsymbol{\omega}^{\top}\sum\limits_{k=1}^{N} {\Phi}^{\top}_k\boldsymbol{y} + \operatorname { c o n s t }\\
				&-\frac{1}{2}\boldsymbol{\omega}^{\top}\left( \operatorname{diag} \mathbb{E}_{\boldsymbol{\alpha}}[  \alpha_m  ] + \mathbb{E}_{\beta}[\beta] \sum\limits_{k=1}^{N} {\Phi}^{\top}_k{\Phi}_k \right) \boldsymbol{\omega}
			\end{split}
		\end{align}

		It can be seen that the right-hand side of \eqref{27a} is a quadratic function of $\boldsymbol{\omega}$, so ${{Q}^{\ast}_{\boldsymbol{\omega}}}\left( \boldsymbol{\omega} \right)$ can be identified as a Gaussian distribution.
		Using the technique of completing the square, the mean and variance of this Gaussian distribution can be obtained as
		\begin{align}
			\begin{split}
				\boldsymbol{\Sigma}_{\boldsymbol{\omega}}&= \left( \operatorname{diag} \mathbb{E}_{\boldsymbol{\alpha}}[  \alpha_m  ] + \mathbb{E}_{\beta}[\beta] \sum\limits_{k=1}^{N} {\Phi}^{\top}_k {\Phi}_k \right) ^{-1}\\ \boldsymbol{\mu}_{\boldsymbol{\omega}} &= \mathbb{E}_{\beta}[\beta] \boldsymbol{\Sigma}_{\boldsymbol{\omega}}\sum\limits_{k=1}^{N} {\Phi}^{\top}_k\boldsymbol{y}.
			\end{split}
		\end{align}

\end{proof}

\begin{corollary}  \label{co2}
	Following the Theorem \ref{th2}, ${{Q}^{\ast}_{\boldsymbol{\alpha}}}\left( \boldsymbol{\alpha} \right)$ is a Gamma distribution
	\begin{align}
		{{Q}^{\ast}_{\boldsymbol{\alpha}}}\left( \boldsymbol{\alpha} \right) =\prod\limits_{\text{i}=1}^M{\text{Gam}(\alpha_m|a_m,b_m)}
	\end{align}
	where $a_m = a_0+\frac{1}{2}$ and $b_m = b_0+\frac{1}{2}{{\mathbb{E}}_{\boldsymbol{\omega}}}[{{\omega}_m}^{\top}{{\omega}_m}]$. $a_m$  and $b_m$ are parameters of the Gamma distribution.
\end{corollary}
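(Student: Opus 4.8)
The plan is to follow the same template as the proof of Corollary~\ref{co1}, but now isolating the dependence on $\boldsymbol{\alpha}$. First I would invoke Theorem~\ref{th2}, specifically the logarithmic update \eqref{18}, to write
\begin{align}
\ln {{Q}^{\ast}_{\boldsymbol{\alpha}}}\left( \boldsymbol{\alpha} \right) = \mathbb{E}_{\boldsymbol{\omega},\beta}\left[ \ln p(\boldsymbol{y},\boldsymbol{\omega},\boldsymbol{\alpha},\beta) \right] + \text{const}.
\end{align}
Factorizing the joint distribution as $p(\boldsymbol{y}|\boldsymbol{\Phi},\boldsymbol{\omega},\beta)\,p(\boldsymbol{\omega}|\boldsymbol{\alpha})\,p(\boldsymbol{\alpha})\,p(\beta)$, I observe that only the weight prior $p(\boldsymbol{\omega}|\boldsymbol{\alpha})$ from \eqref{6} and the hyperprior $p(\boldsymbol{\alpha})$ from \eqref{7} carry any functional dependence on $\boldsymbol{\alpha}$; the likelihood and the $\beta$-prior are constant in $\boldsymbol{\alpha}$ and are absorbed into the normalizing term. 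Hence the expectation collapses to $\mathbb{E}_{\boldsymbol{\omega}}[\ln p(\boldsymbol{\omega}|\boldsymbol{\alpha})] + \ln p(\boldsymbol{\alpha})$ up to a constant.

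Next I would expand the two retained terms. From the Gaussian prior \eqref{6}, $\ln p(\boldsymbol{\omega}|\boldsymbol{\alpha}) = \sum_{m=1}^{M}\left[\tfrac{1}{2}\ln\alpha_m - \tfrac{1}{2}\alpha_m\,\omega_m^{\top}\omega_m\right] + \text{const}$, so taking the expectation over $\boldsymbol{\omega}$ merely replaces $\omega_m^{\top}\omega_m$ by $\mathbb{E}_{\boldsymbol{\omega}}[\omega_m^{\top}\omega_m]$, which is available in closed form from the Gaussian $Q^{\ast}_{\boldsymbol{\omega}}$ of Corollary~\ref{co1}. From the Gamma prior \eqref{7}, $\ln p(\boldsymbol{\alpha}) = \sum_{m=1}^{M}\left[(a_0-1)\ln\alpha_m - b_0\alpha_m\right] + \text{const}$. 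Because both contributions factorize across the index $m$, the resulting $\ln Q^{\ast}_{\boldsymbol{\alpha}}$ is a sum over $m$, which immediately yields the product form $\prod_m \text{Gam}(\alpha_m \mid a_m,b_m)$ asserted in the statement.

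Finally I would collect, for each $m$, the coefficient of $\ln\alpha_m$ and of $\alpha_m$, obtaining
\begin{align}
\begin{split}
\ln {{Q}^{\ast}_{\boldsymbol{\alpha}}}\left( \boldsymbol{\alpha} \right) = \sum_{m=1}^{M}\Big[ & \left(a_0+\tfrac{1}{2}-1\right)\ln\alpha_m \\
& - \left(b_0+\tfrac{1}{2}\mathbb{E}_{\boldsymbol{\omega}}[\omega_m^{\top}\omega_m]\right)\alpha_m \Big] + \text{const}.
\end{split}
\end{align}
Matching this against the canonical log-density $\ln\text{Gam}(\alpha_m \mid a_m,b_m) = (a_m-1)\ln\alpha_m - b_m\alpha_m + \text{const}$ identifies $a_m = a_0+\tfrac{1}{2}$ and $b_m = b_0+\tfrac{1}{2}\mathbb{E}_{\boldsymbol{\omega}}[\omega_m^{\top}\omega_m]$, completing the proof. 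The only step requiring genuine care is the expectation over $\boldsymbol{\omega}$: I must evaluate $\mathbb{E}_{\boldsymbol{\omega}}[\omega_m^{\top}\omega_m]$ using the mean and covariance of $Q^{\ast}_{\boldsymbol{\omega}}$ (so that it equals $[\boldsymbol{\mu}_{\boldsymbol{\omega}}]_m^{2} + [\boldsymbol{\Sigma}_{\boldsymbol{\omega}}]_{mm}$) rather than treating $\boldsymbol{\omega}$ as fixed; this coupling back to Corollary~\ref{co1} is precisely what makes the overall scheme iterative. Everything else is routine pattern-matching to the exponential-family form of the Gamma distribution.
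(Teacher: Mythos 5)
Your proposal is correct and follows essentially the same route as the paper's proof: invoke the mean-field update \eqref{18}, discard all terms without functional dependence on $\boldsymbol{\alpha}$, expand $\ln p(\boldsymbol{\omega}|\boldsymbol{\alpha})$ and $\ln p(\boldsymbol{\alpha})$, and pattern-match the coefficients of $\ln\alpha_m$ and $\alpha_m$ to the Gamma log-density. If anything, your explicit per-component sum over $m$ is slightly cleaner than the paper's vectorized expansion in \eqref{32a} (which produces an $M/2$ that only reduces to the stated $1/2$ once read component-wise), and your closing remark that $\mathbb{E}_{\boldsymbol{\omega}}[\omega_m^{\top}\omega_m]=[\boldsymbol{\mu}_{\boldsymbol{\omega}}]_m^{2}+[\boldsymbol{\Sigma}_{\boldsymbol{\omega}}]_{mm}$ correctly ties the update back to Corollary \ref{co1}.
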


\begin{proof}

		According to Theorem \ref{th2},  the logarithmic form of the optimal variational distribution $ {{Q}^{\ast}_{\boldsymbol{\alpha}}}\left( \boldsymbol{\alpha} \right) $ can be given by applying \eqref{18}:
		\begin{align}  \label{24}
			\ln {{Q}^{\ast}_{\boldsymbol{\alpha}}}\left( \boldsymbol{\alpha} \right)&={{\mathbb{E}}_{\boldsymbol{\omega},\beta }}\left[ \ln p\left( \mathbf{y},\boldsymbol{\alpha}, \beta ,\boldsymbol{\omega} \right) \right]+\operatorname { c o n s t }.
		\end{align}

		Only those terms that have some functional dependence on $\boldsymbol{\alpha}$ need to be retained, all other terms can be absorbed into the normalization constant.
		$\ln {{Q}^{\ast}_{\boldsymbol{\alpha}}}\left( \boldsymbol{\alpha} \right)$ can be further simplified as
		\begin{align} \label{25}
			\begin{split}
				\ln {{Q}^{\ast}_{\boldsymbol{\alpha}}}\left( \boldsymbol{\alpha} \right)
				& ={{\mathbb{E}}_{\boldsymbol{\omega},\beta }}[\text{ln}\ p(\boldsymbol{\omega}|\boldsymbol{\alpha} )]+{{\mathbb{E}}_{\boldsymbol{\omega},\beta }}[\text{ln}\ p(\boldsymbol{\alpha} )]+\ \operatorname { c o n s t } \\ 
				& =\text{ln}\ p(\boldsymbol{\alpha} )+{{\mathbb{E}}_{\boldsymbol{\omega},\beta}}[\text{ln}\ p(\boldsymbol{\omega}|\boldsymbol{\alpha} )]+ \operatorname { c o n s t } \\ 
			\end{split}
		\end{align}
		where  $p\left( \boldsymbol{\omega}|\boldsymbol{\alpha}  \right)$ and $p(\boldsymbol{\alpha}) $ are calculated through \eqref{6} and \eqref{7}, respectively. 
		Expanding the logarithmic form of $p\left( \boldsymbol{\omega}|\boldsymbol{\alpha}  \right)$ and $p(\boldsymbol{\alpha}) $, $	\ln {{Q}^{\ast}_{\boldsymbol{\alpha}}}\left( \boldsymbol{\alpha} \right)$ is given by
		\begin{align}\label{32a}
			\begin{split}
				\ln {{Q}^{\ast}_{\boldsymbol{\alpha}}}\left( \boldsymbol{\alpha} \right)			& =\underbrace{({{a }_{0}}-1)\text{ln}\ \boldsymbol{\alpha} -{{b }_{0}}\boldsymbol{\alpha} }_{\text{Gamma PDF}}\\
				&+\underbrace{\frac{M}{2}\text{ln}\ \boldsymbol{\alpha} -\frac{\boldsymbol{\alpha} }{2}{{\mathbb{E}}_{\boldsymbol{\omega}}}[{{\boldsymbol{\omega}}^{\top}}\boldsymbol{\omega}]}_{\text{Gaussian PDF}} 
				+ \operatorname { c o n s t }\\ 
				& =\underbrace{({{a }_{0}}+\frac{M}{2}-1)}_{a_m \ \text{parameter}} \ln \boldsymbol{\alpha} \\
				&-(\underbrace{{{b }_{0}}+\frac{1}{2}{{\mathbb{E}}_{\boldsymbol{\omega}}}[{{\boldsymbol{\omega}}^{\top}}\boldsymbol{\omega}]}_{b_m \ \text{parameter}})\boldsymbol{\alpha} 
				+ \operatorname { c o n s t }    .     \\ 
			\end{split}
		\end{align}

		It can be seen that the right-hand side of \eqref{32a} is the logarithm of a gamma distribution, so ${{Q}^{\ast}_{\boldsymbol{\alpha}}}\left( \boldsymbol{\alpha} \right)$ can be identified as a gamma distribution.
		The two parameters of this Gamma distribution can be obtained as
		\begin{align}
			\begin{split}
				a_m &= a_0+\frac{1}{2} \\
				b_m &= b_0+\frac{1}{2}{{\mathbb{E}}_{\boldsymbol{\omega}}}[{{\omega}_m}^{\top}{{\omega}_m}].
			\end{split}
		\end{align}

\end{proof}

\begin{corollary} \label{co3}
	Following the Theorem \ref{th2}, ${Q}^{\ast}_{\beta}\left( \beta \right)$ is a gamma distribution
	\begin{align}
		{Q}^{\ast}_{\beta}\left( \beta \right)={\text{Gam}(\beta|c,d)}
	\end{align}
	where $c = c_0+\frac{N}{2}$ and $d = {{d}_{0}}+\frac{1}{2}\sum\limits_{k=1}^{N}{{y}_{k}^{2}}-{{\mathbb{E}}_{\boldsymbol{\omega }}}{{[\boldsymbol{\omega }]}^{\top }}\sum\limits_{k=1}^{N}{{{{\Phi} }_{k}}{{y}_{k}}}+\frac{1}{2}\sum\limits_{k=1}^{N}{{\Phi} _{k}^{\top }{{\mathbb{E}}_{\boldsymbol{\omega }}}[{{\boldsymbol{\omega }}^{\top }}\boldsymbol{\omega }]{{{\Phi} }_{k}}}$. $c$  and $d$ are parameters of the Gamma distribution.
\end{corollary}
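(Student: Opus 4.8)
The plan is to follow the same template as Corollaries \ref{co1} and \ref{co2}, since $\beta$ is simply the remaining factor of the mean-field factorization and Theorem \ref{th2} supplies a uniform recipe for every factor. First I would instantiate the optimal-factor formula \eqref{18} for $\beta$, giving
\begin{align}
\ln {Q}^{\ast}_{\beta}\left( \beta \right) = {{\mathbb{E}}_{\boldsymbol{\omega},\boldsymbol{\alpha}}}\left[ \ln p\left( \mathbf{y},\boldsymbol{\alpha}, \beta ,\boldsymbol{\omega} \right) \right] + \operatorname{const}.
\end{align}
Writing the joint distribution as the product of the likelihood \eqref{5} and the priors \eqref{6}--\eqref{8}, and discarding every term with no functional dependence on $\beta$, only the likelihood and the gamma prior $p(\beta)$ survive, so that
\begin{align}
\ln {Q}^{\ast}_{\beta}\left( \beta \right) = {{\mathbb{E}}_{\boldsymbol{\omega}}}\left[ \ln p\left( \mathbf{y}|\boldsymbol{\Phi},\boldsymbol{\omega},\beta \right) \right] + \ln p(\beta) + \operatorname{const}.
\end{align}

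Next I would substitute the explicit logarithms. From \eqref{5} the expected log-likelihood contributes $\frac{N}{2}\ln\beta - \frac{\beta}{2}\sum_{k=1}^{N}{{\mathbb{E}}_{\boldsymbol{\omega}}}\!\left[(y_k - \boldsymbol{\omega}^{\top}{\Phi}_k)^2\right]$, and from \eqref{8} the log-prior contributes $(c_0-1)\ln\beta - d_0\beta$; all remaining pieces fold into the constant. The only genuine calculation is the expectation of the quadratic residual: expanding $(y_k - \boldsymbol{\omega}^{\top}{\Phi}_k)^2 = y_k^2 - 2 y_k {\Phi}_k^{\top}\boldsymbol{\omega} + {\Phi}_k^{\top}\boldsymbol{\omega}\boldsymbol{\omega}^{\top}{\Phi}_k$ and using linearity yields $y_k^2 - 2 y_k {\Phi}_k^{\top}{{\mathbb{E}}_{\boldsymbol{\omega}}}[\boldsymbol{\omega}] + {\Phi}_k^{\top}{{\mathbb{E}}_{\boldsymbol{\omega}}}[\boldsymbol{\omega}\boldsymbol{\omega}^{\top}]{\Phi}_k$, where the required second moment is supplied by Corollary \ref{co1} as ${{\mathbb{E}}_{\boldsymbol{\omega}}}[\boldsymbol{\omega}\boldsymbol{\omega}^{\top}] = \boldsymbol{\Sigma}_{\boldsymbol{\omega}} + \boldsymbol{\mu}_{\boldsymbol{\omega}}\boldsymbol{\mu}_{\boldsymbol{\omega}}^{\top}$.

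Collecting the coefficient of $\ln\beta$ and of $-\beta$ then gives
\begin{align}
\ln {Q}^{\ast}_{\beta}\left( \beta \right) = \left( c_0 + \tfrac{N}{2} - 1 \right)\ln\beta - \left( d_0 + \tfrac{1}{2}\sum_{k=1}^{N}{{\mathbb{E}}_{\boldsymbol{\omega}}}\!\left[(y_k - \boldsymbol{\omega}^{\top}{\Phi}_k)^2\right] \right)\beta + \operatorname{const},
\end{align}
which I recognize as the logarithm of a gamma density in $\beta$, exactly the pattern-matching step used in Corollary \ref{co2}. Reading off the shape and rate then gives $c = c_0 + \frac{N}{2}$, while expanding the expected residual term by term reproduces the stated $d = d_0 + \frac{1}{2}\sum_{k}y_k^2 - {{\mathbb{E}}_{\boldsymbol{\omega}}}[\boldsymbol{\omega}]^{\top}\sum_{k}{\Phi}_k y_k + \frac{1}{2}\sum_{k}{\Phi}_k^{\top}{{\mathbb{E}}_{\boldsymbol{\omega}}}[\boldsymbol{\omega}^{\top}\boldsymbol{\omega}]{\Phi}_k$. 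I expect no real obstacle here; the only point needing care is the bookkeeping of the cross term ${\Phi}_k^{\top}\boldsymbol{\omega}\boldsymbol{\omega}^{\top}{\Phi}_k$, ensuring that the full second-moment matrix rather than the squared mean is inserted so that the last summand of $d$ comes out correctly.
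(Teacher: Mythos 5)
Your proposal matches the paper's own proof essentially step for step: instantiate \eqref{18} for $\beta$, retain only the likelihood and the prior $p(\beta)$, expand the logarithms, and read off the Gamma shape and rate, with the same pattern-matching used in Corollaries \ref{co1} and \ref{co2}. Your explicit expansion of the expected quadratic residual via $\mathbb{E}_{\boldsymbol{\omega}}[\boldsymbol{\omega}\boldsymbol{\omega}^{\top}]=\boldsymbol{\Sigma}_{\boldsymbol{\omega}}+\boldsymbol{\mu}_{\boldsymbol{\omega}}\boldsymbol{\mu}_{\boldsymbol{\omega}}^{\top}$ is slightly more detailed than the paper's, but it is the same argument.
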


\begin{proof}
		According to Theorem \ref{th2},  the logarithmic form of the optimal variational distribution $ {Q}^{\ast}_{\beta}\left( \beta \right) $ can be given by applying \eqref{18}:
		\begin{align}  \label{28}
			\ln {Q}^{\ast}_{\beta}\left( \beta \right)&={{\mathbb{E}}_{\boldsymbol{\omega},\boldsymbol{\alpha} }}\left[ \ln p\left( \boldsymbol{y},\boldsymbol{\alpha}, \beta ,\boldsymbol{\omega} \right) \right] +\operatorname { c o n s t }.
		\end{align}

		Only those terms that have some functional dependence on $\beta$ need to be retained, all other terms can be absorbed into the normalization constant.
		$\ln {Q}^{\ast}_{\beta}\left( \beta \right)$ can be further simplified as
		\begin{align}  \label{29}
					\begin{split} 
				&\ln {Q}^{\ast}_{\beta}\left( \beta \right)
				={{\mathbb{E}}_{\boldsymbol{\omega},\boldsymbol{\alpha} }}\left[ \ln p\left( \boldsymbol{y}| \boldsymbol{\Phi}, \boldsymbol{\omega} ,\boldsymbol{\alpha},\beta \right) \right]\\
				&+ {{\mathbb{E}}_{\boldsymbol{\omega},\boldsymbol{\alpha} }}\left[ \ln p(\beta) \right]  +\operatorname { c o n s t }\\
				&=\ln p(\beta) + {{\mathbb{E}}_{\boldsymbol{\omega},\boldsymbol{\alpha} }}\left[ \ln p\left( \boldsymbol{y}| \boldsymbol{\Phi}, \boldsymbol{\omega} ,\boldsymbol{\alpha},\beta \right) \right] +\operatorname { c o n s t }
			\end{split}
		\end{align}
		where $p\left( \boldsymbol{y}| \boldsymbol{\Phi}, \boldsymbol{\omega} ,\boldsymbol{\alpha},\beta \right)$ and $p(\beta)$  are calculated through \eqref{5} and \eqref{8}, respectively.
		Expanding the logarithmic form of $p\left( \boldsymbol{y}| \boldsymbol{\Phi}, \boldsymbol{\omega} ,\boldsymbol{\alpha},\beta \right)$ and $p(\beta)$, $		\ln {Q}^{\ast}_{\beta}\left( \beta \right)$ is given by
		\begin{align} \label{37a}
			\begin{split}
				&\ln {Q}^{\ast}_{\beta}\left( \beta \right)
				 = \underbrace{\left( {{c}_{0}}-1+\frac{N}{2} \right)}_{c \ \text{parameter}} \ln \beta  \\
				&- \underbrace{\left( {{d}_{0}}+\frac{1}{2}\mathbb{E}\left[ { \sum\limits_{\text{k}=1}^N {\left( {{y}_{k}}-{{\boldsymbol{\omega }}^{\top }}\sum\limits_{k=1}^{N} {\Phi}_k \right)}^{2}} \right]    \right) }_{d \ \text{parameter}} \beta + \operatorname { c o n s t }  .\\ 
			\end{split}
		\end{align}

		It can be seen that the right-hand side of \eqref{37a} is the logarithm of a gamma distribution, so ${Q}^{\ast}_{\beta}\left( \beta \right)$ can be identified as a gamma distribution.
		The two parameters of this Gamma distribution can be obtained as
		\begin{align}
			\begin{split}
				c &= c_0+\frac{N}{2}\\
				d &= {{d}_{0}}+\frac{1}{2}\sum\limits_{k=1}^{N}{{y}_{k}^{2}}-{{\mathbb{E}}_{\boldsymbol{\omega }}}{{[\boldsymbol{\omega }]}^{\top }}\sum\limits_{k=1}^{N}{{{{\Phi} }_{k}}{{y}_{k}}}\\
				&+\frac{1}{2}\sum\limits_{k=1}^{N}{{\Phi} _{k}^{\top }{{\mathbb{E}}_{\boldsymbol{\omega }}}[{{\boldsymbol{\omega }}^{\top }}\boldsymbol{\omega }]{{{\Phi} }_{k}}}.
			\end{split}
		\end{align}

\end{proof}

Applying Theorem \ref{th1}, ELBO $\mathcal{L}(Q(\boldsymbol{\Theta}))$ is guaranteed to converge because it is convex with respect to each of $q^{\ast}_j(\Theta_j)$.  $\mathcal{L}(Q(\boldsymbol{\Theta}))$ can be written more explicitly as 
\begin{align}\label{39a}
	\begin{split}
		&\mathcal{L}(Q(\boldsymbol{\Theta}))={{\mathbb{E}}_{\boldsymbol{\Theta}}} \left[ \ln p(\boldsymbol{y},\boldsymbol{\omega},\boldsymbol{\alpha},\beta)\right] - {{\mathbb{E}}_{\boldsymbol{\Theta}}}\left[ \ln Q(\boldsymbol{\boldsymbol{\omega}},\boldsymbol{\alpha},\beta) \right] \\
		& =  {{\mathbb{E}}_{\boldsymbol{\omega},\beta}}[\ln p(\boldsymbol{y}|\boldsymbol{\Phi},\boldsymbol{\omega},\boldsymbol{\alpha},\beta)]+{{\mathbb{E}}_{\boldsymbol{\omega},\boldsymbol{\alpha}  }}[\ln  p(\boldsymbol{\omega}|\boldsymbol{\alpha} )]\\
		&+{{\mathbb{E}}_{\boldsymbol{\alpha}  }}[\ln p(\boldsymbol{\alpha} )]+{{\mathbb{E}}_{\beta }}[\ln p(\beta )]\\
		&-{{\mathbb{E}}_{\boldsymbol{w}}}[\ln Q_{\boldsymbol{\omega}}(\boldsymbol{\omega})]
		-{{\mathbb{E}}_{\boldsymbol{\alpha} }}[\ln Q_{\boldsymbol{\alpha}}(\boldsymbol{\alpha} )]-{{\mathbb{E}}_{\beta}}[\ln Q_{\beta}(\beta )].
	\end{split}
\end{align}

All terms in \eqref{39a} can be evaluated as follow
\begin{small}
	\begin{align} \label{32}
		\begin{split}
			&\mathcal{L}(Q(\boldsymbol{\Theta})) =\frac{N}{2}\left( \mathbb{E}_{\beta}[\ln \beta] - \ln (2\pi)\right)
			-\frac{1}{2}\mathbb{E}_{\beta}[\beta] \\
			&\cdot \left( \sum\limits_{\text{k}=1}^N {y}^2_k -2\mathbb{E}_{\boldsymbol{\omega}}[\boldsymbol{\omega}]^{\top}\sum\limits_{k=1}^{N}{{{{\Phi} }_{k}}{{y}_{k}}}+\frac{1}{2}\sum\limits_{k=1}^{N}{{\Phi} _{k}^{\top }{{\mathbb{E}}_{\boldsymbol{\omega }}}[{{\boldsymbol{\omega }}^{\top }}\boldsymbol{\omega }]{{{\Phi} }_{k}}}       \right) \\
			&-\frac{N+1}{2}\ln (2\pi) 
			-\frac{1}{2} \sum\limits_{\text{i}=0}^M \left( \mathbb{E}_{\boldsymbol{\alpha}}[\ln \alpha_m] + \mathbb{E}_{\boldsymbol{\alpha}}[ \alpha_m] \mathbb{E}_{\boldsymbol{\omega}}[ {\omega}_m^{\top}{\omega}_m]\right)  \\
			& +(N+1)(a_0 \ln b_0 - \ln \Gamma(a))   + (a_0 -1) \sum\limits_{\text{i}=1}^M \mathbb{E}_{\boldsymbol{\alpha}}[\ln \alpha_m] 
			\\&-b_0 \sum\limits_{\text{i}=1}^M \mathbb{E}_{\boldsymbol{\alpha}}[\alpha_m] 	+c_0 \ln d_0 +(c_0-1)\mathbb{E}_{\boldsymbol{\alpha}}[\ln \alpha_m] - d_0 \mathbb{E}_{\boldsymbol{\alpha}}[ \alpha_m] \\
			&- \ln \Gamma(c_0)  -\frac{1}{2}\left( (N+1)(1+\ln(2\pi))-\ln|\boldsymbol{\Sigma}_{\boldsymbol{\omega}} |    \right)  \\
			&	- \sum\limits_{\text{i}=1}^M \left( a_m\ln b_m+ (a_m -1)\mathbb{E}_{\boldsymbol{\alpha}}[\ln \alpha_m] + b_m \mathbb{E}_{\boldsymbol{\alpha}}[\alpha_m] -\ln \Gamma(a_m) \right)
			\\&-c \ln d - (c-1) \mathbb{E}_{\beta}[\ln \beta] +d \mathbb{E}_{\beta}[\beta]+\ln \Gamma(c) 
		\end{split}
	\end{align}
\end{small}
where $\Gamma$ denotes the Gamma function.

In summary, approximate distribution $Q(\boldsymbol{\omega},\boldsymbol{\alpha},\beta)$ can now be calculated by iteratively computing ${{Q}^{\ast}_{\boldsymbol{\omega}}}\left( \boldsymbol{\omega} \right)$, ${{Q}^{\ast}_{\boldsymbol{\alpha}}}\left( \boldsymbol{\alpha} \right)$ and ${Q}^{\ast}_{\beta}\left( \beta \right)$.
At each iteration, $\mathcal{L}(Q(\boldsymbol{\Theta}))$ can be computed by \eqref{32}.
The required moments in \eqref{32} are easily evaluated using basic properties of probability distributions
\begin{align}
	\begin{split}
		&\mathbb{E}_{\boldsymbol{\omega}}[\boldsymbol{\omega}]= \boldsymbol{\mu}_{\boldsymbol{\omega}}  , \ \ \ \ \
		\mathbb{E}_{\boldsymbol{\omega}}[\boldsymbol{\omega}\boldsymbol{\omega}^{\top}]=\boldsymbol{\Sigma}_{\boldsymbol{\omega}} +  \boldsymbol{\mu}_{\boldsymbol{\omega}}\boldsymbol{\mu}^{\top}_{\boldsymbol{\omega}},\\
		&\mathbb{E}_{\boldsymbol{\alpha}}[  \alpha_m  ] = \frac{a_m}{b_m} ,\ \ \
		\mathbb{E}_{\boldsymbol{\alpha}}[ \ln \alpha_m  ]=  \psi(a_m) - \ln b_m ,        \\
		&\mathbb{E}_{\beta}[\beta] = \frac{c}{d} ,\ \ \ \ \ \ \
		\mathbb{E}_{\beta}[ \ln \beta  ]=  \psi(c) - \ln d  ,       \\
	\end{split}
\end{align}
where  $\psi(a) := \frac{d}{da}\ln \Gamma(a)$ denotes the digamma function.

	\begin{remark}
		Assumption \ref{qd} comes from statistical physics, where it is known as mean-field theory \citep{chaikin1995principles}.
	\end{remark}
	\begin{remark}
		With respect to Theorem \ref{th2}, it is more convenient in practice to work with the form \eqref{18} and then recover the normalization constant by inspection. The same approach is used for the three Corollaries. It should be noted that \eqref{18} does not satisfy all cases, and some special cases must use \eqref{12a}.
	\end{remark}
	\begin{remark}
		Theorem \ref{th1} and Theorem \ref{th2} are resummarized from papers \citep{jordanIntroductionVariationalMethods1999a,beal2003variational,bishop2006pattern}, which focus more on the origins of variational reasoning, as well as the derivation of the formulas and the algorithms themselves. 
		To describe variational inference more directly, we resummarized them as \ref{th1} and Theorem \ref{th2}. In contrast to existing work, Theorem \ref{th1} and Theorem \ref{th2} focus more on applications of variational inference.
	\end{remark}
	\begin{remark}
		In the proof of Corollary \ref{co1}. It is worth emphasizing that ${{Q}^{\ast}_{\boldsymbol{\omega}}}\left( \boldsymbol{\omega} \right)$ is not assumed to be Gaussian, but rather this result is derived by variational optimizing the KL divergence of all possible distributions ${{Q}_{\boldsymbol{\omega}}}\left( \boldsymbol{\omega} \right)$. Moreover, the additive constant does not need to be considered explicitly, since it represents the normalization constant. The same holds for Corollary \ref{co2} and Corollary \ref{co3}.
	\end{remark}
	\begin{remark}
		The computation of ELBO is not necessary for the inference problem, but it provides a check that the algorithm and the theory behind it are correct.  This is due to the fact that every update of the optimal variational distribution ensures an increase in the ELBO.
	\end{remark}
	\begin{remark}
		EM is commonly used to learn the parameters. However, the EM method can only estimate the maximum posterior for each parameter, representing the single most probable value,
		therefore does not take the uncertainty of the parameters into account. 
		Variational Bayesian inference is used in this study to learn unknown nonlinear systems and infer the future evolution of the system.
	\end{remark}

\section{Model predictive control with stability } \label{sec:mpc}

The parameter $\boldsymbol{\Theta}$ of NARX model in \eqref{5} is approximated by the optimal variational distribution $q^{\ast}_j(\Theta_j)$ in Section II. 
The prediction for a new input can then be obtained by computing the predictive distribution at the $k+1$ moment. Thus, given the input-output training data $\mathcal{D}$, the new output is an evaluation of the prediction distribution $p({{y}}_{k+1}|{\Phi}_{k+1},\mathcal{D}) $
\hspace{-10pt}\begin{small}
\hspace{-10pt}\begin{align} \label{34}
	\begin{split}
		&p({{y}}_{k+1}|{\Phi}_{k+1},\mathcal{D}) \\
		&= \int_{\boldsymbol{\omega}}\int_{\boldsymbol{\alpha}}\int_{\beta}   p({y}_{k+1}|{\Phi}_{k+1},\boldsymbol{\omega},\boldsymbol{\alpha},\beta) p(\boldsymbol{\omega},\boldsymbol{\alpha},\beta|\mathcal{D})
		d{\boldsymbol{\omega}}d{\boldsymbol{\alpha}}d{\beta} \\
		&\simeq   \int_{\boldsymbol{\omega}}\int_{\boldsymbol{\alpha}}\int_{\beta} p({{y}}_{k+1}|{\Phi}_{k+1},\boldsymbol{\omega},\boldsymbol{\alpha},\beta) Q_{\boldsymbol{\omega}}(\boldsymbol{\omega})Q_{\boldsymbol{\alpha}}(\boldsymbol{\alpha})Q_{\beta}(\beta)	d{\boldsymbol{\omega}}d{\boldsymbol{\alpha}}d{\beta} \\
		&=\int_{\boldsymbol{\omega}}\int_{\beta}    \mathcal{N}({{y}}_{k+1}|\boldsymbol{\omega}^{\top}{\Phi}_{k+1},\beta^{-1})  \mathcal{N}(\boldsymbol{\omega}|\boldsymbol{\mu}_{\boldsymbol{\omega}},\boldsymbol{\Sigma}_{\boldsymbol{\omega}})        \text{Gam}(\beta|c,d)	d{\boldsymbol{\omega}}d{\beta} \\
		&=\int_{\beta}   \mathcal{N}({{y}}_{k+1}|\boldsymbol{\mu}_{\boldsymbol{\omega}}^{\top}{\Phi}_{k+1}, \beta^{-1}(1+ {\Phi}_{k+1}^{\top} \boldsymbol{\Sigma}_{\boldsymbol{\omega}}{\Phi}_{k+1} ))     \text{Gam}(\beta|c,d)  d\beta \\
		& = \text{St}({{y}}_{k+1}|\boldsymbol{\mu}_{\boldsymbol{\omega}}^{\top}{\Phi}_{k+1}, (1+ {\Phi}_{k+1}^{\top} \boldsymbol{\Sigma}_{\boldsymbol{\omega}}{\Phi}_{k+1} )^{-1} \frac{a_m}{b_m}, 2a_m ) .
	\end{split}
\end{align}
\end{small}
where $\text{St}$ denotes Student's t distribution.  $\boldsymbol{\mu}_{\boldsymbol{\omega}}^{\top}{\Phi}_{k+1}$,  $(1+ {\Phi}_{k+1}^{\top} \boldsymbol{\Sigma}_{\boldsymbol{\omega}}{\Phi}_{k+1} ))^{-1} \frac{a_m}{b_m}$, and $2a_m$ 
	are mean, precision and degrees of freedom for this Student's t distribution, respectively.

As the degrees of freedom $2c$ tends to infinity, the Student's t distribution can be reduced to a Gaussian distribution with mean $\boldsymbol{\mu}_{\boldsymbol{\omega}}^{\top}\boldsymbol{\Phi}_{k+1}$, precision $(1+ \boldsymbol{\Phi}_{k+1}^{\top} \boldsymbol{\Sigma}_{\boldsymbol{\omega}}\boldsymbol{\Phi}_{k+1} ))^{-1} \frac{c}{d}$. As a generalization of the Gaussian distribution, the Student¡¯s t distribution can be regarded as a mixture of an infinite number of Gaussian distributions with the same mean and different variances, and the maximum likelihood function of the Student¡¯s t distribution is robust to outliers.

The prediction distribution in \eqref{34} provides the mean and variance as follows
\begin{align} \label{35}
	\begin{split}
		\mathbb{E}[y_{k+1}] &= \boldsymbol{\mu}_{\boldsymbol{\omega}}^{\top}{\Phi}_{k+1} ,\\
		\text{Var}[y_{k+1}] &= (1+ {\Phi}_{k+1}^{\top} \boldsymbol{\Sigma}_{\boldsymbol{\omega}}{\Phi}_{k+1} )\frac{b_m}{a_m-1} .
	\end{split}
\end{align}

Following the prediction result of variational inference in \eqref{34}, the nonlinear dynamic system given in \eqref{2} can be rewritten as
\begin{align}\label{36}
	{y}_{k+1} = f\left( \boldsymbol{x}_k,\boldsymbol{u}_k; \boldsymbol{\Theta}, \mathcal{D} \right) +{\xi}_{k}= \mathbb{E}[y_{k+1}] .
\end{align}

Given a specific state of the plant and a sequence of forthcoming control inputs at time step $k$, the new state $	\boldsymbol{x}_{k+1}$ of the nonlinear dynamic system given in \eqref{2} can be expressed as
\begin{align} \label{37}
	\begin{split}
		\boldsymbol{x}_{k+1} =&F \left( {\boldsymbol{x}}_{ k},{\boldsymbol{u}}_{k}; \boldsymbol{\Theta},\mathcal{D} \right) \\
		= &( f\left( \boldsymbol{x}_k,\boldsymbol{u}_k; \boldsymbol{\Theta}, \mathcal{D} \right)+\xi_{k}\\
		&, 
		{y}_k, \cdots, {y}_{k-n_a+1}, {u}_k, \cdots , {u}_{k-n_b+1}
		)  .
	\end{split}
\end{align}

Let $\boldsymbol{y}_{j \vert k}$  denotes at time $k$  predict the measurements output at time $k + j$  based on the prospective control sequence $\boldsymbol{u}_{k+j}$, $\hat{\boldsymbol{x}}_{j \vert k}$ denotes the predicted state at time $j$ given the measurements at time $k$.
The prediction model applied in the nonlinear MPC is deduced from the prediction function given in \eqref{37} as following:
\begin{align} \label{43}
	\begin{split}
		\hat{\boldsymbol{x}}_{j+1 \vert k}  
		&=F \left( \hat{\boldsymbol{x}}_{j \vert k},\hat{\boldsymbol{u}}_{k+j}; \boldsymbol{\Theta},\mathcal{D} \right) \\
		&=( f\left( \hat{\boldsymbol{x}}_{j \vert k},\boldsymbol{u}_{k+j}; \boldsymbol{\Theta}, \mathcal{D} \right) +{\xi}_{k},\hat{{y}}_{j \vert k},\cdots ,{{y}}_{k},
		\cdots \\
		&, {{y}}_ {k+j-{{n}_{a}}+1} ,\cdots ,{{u}}_{k+j} ,\cdots ,{{u}}_{k+j-{{n}_{b}}+1} ) \\
		&=( \mathbb{E}[y_{j+1 \vert k}],\hat{{y}}_{j \vert k},\cdots ,{{y}}_{k},\cdots \\
		&,{{y}}_ {k+j-{{n}_{a}}+1} ,
		\cdots ,{{u}}_{k+j} ,\cdots ,{{u}}_{k+j-{{n}_{b}}+1} ).\\
	\end{split}
\end{align}
where the predicted state \\$\hat{\boldsymbol{x}}_{j \vert k} = \left( \hat{{y}}_{j \vert k},\cdots ,\hat{{y}}_{1 \vert k} ,{{y}}_{k},\cdots ,{{y}}_ {k+j-{{n}_{a}}} ,\cdots ,{{u}}_{k+j-1}  ,\cdots ,{{u}}_{k+j-{{n}_{b}}}
\right) $ includes real measurements ${y}$ or ${u}$ if $n_a\geq j$ or $n_b\geq j$, respectively, otherwise it includes only the estimated  $\hat{y}$ or $\hat{u}$.

A positive stage cost function $\ell(\boldsymbol{x}, \boldsymbol{u})$ is introduced as follows to deduce  the predictive controller
\begin{align} \label{47a}
	\ell \left( {\boldsymbol{x}},\boldsymbol{u} \right) = \ell_s(\boldsymbol{x}-\boldsymbol{x}_r,\boldsymbol{u}-\boldsymbol{u}_r)+\ell_b(\boldsymbol{y}),
\end{align}
where $\ell_s(\boldsymbol{x}-\boldsymbol{x}_r,\boldsymbol{u}-\boldsymbol{u}_r)$ penalizes the tracking error of inputs and state deviation with respect to the reference that is provided by $(\boldsymbol{x}_r, \boldsymbol{u}_r)$. 
Additionally, $\ell_b(\boldsymbol{y})$ represents a barrier function responsible for implementing a flexible constraint on the output values, which is defined by
\begin{align}
	\ell_b(\boldsymbol{y}) \geq   \alpha_b(d(\boldsymbol{y},\mathcal{Y}))      \ , {\forall} \boldsymbol{y}
\end{align}
where the barrier cost function $\ell_b(\cdot)$ should adhere to the condition that $\ell_b(\boldsymbol{y}) = 0, \forall \boldsymbol{y} \in \mathcal{Y}$, $\alpha_b(\cdot)$ is denoted the  $\mathcal{K}$ functions,  $d(\cdot)$ represents  the distance function.

\begin{assumption} \label{ass-4-4a}
	It is assumed $\ell \left( {\boldsymbol{x}},\boldsymbol{u} \right)$ to be continuous, then 
	two $\mathcal{K}$ functions, denoted as $\alpha_x$ and $\alpha_u$, exist such that:
	\begin{align}
		\ell \left( {\boldsymbol{x}},\boldsymbol{u} \right) \geq \alpha_x(|| \boldsymbol{x}-\boldsymbol{x}_r||)+\alpha_u(||\boldsymbol{u}-\boldsymbol{u}_r||).
	\end{align}
\end{assumption} 

With the prediction model \eqref{43}, the objective of the nonlinear MPC is to solve the following finite-horizon optimal regulation problem
    \begin{align} \label{42}
	\begin{split}
		&\underset{\hat{\boldsymbol{u}}}{\mathop{\min }}\,\mathcal{J}_{N_c,N_p}\left( \hat{\boldsymbol{x}}_k,\hat{\boldsymbol{u}} \right)\\
		=&\underset{i=0}{\overset{N_c-1}{\mathop{\sum }}}\, \ell \left( \hat{\boldsymbol{x}}_{i \vert k},\boldsymbol{u}_i \right)+ \underset{i=N_c}{\overset{N_p-1}{\mathop{\sum }}}\, \ell \left( \hat{\boldsymbol{x}}_{i \vert k},\kappa_f(\hat{\boldsymbol{x}}_{i \vert k})\right)  \\
		&+\lambda {{V}_{f}}\left(\boldsymbol{x}_{N \vert k} - \boldsymbol{x}_r \right)  \\
		\mathop{s.t.}\ &\underbrace{\hat{\boldsymbol{x}}_{j+1 \vert k}  =F \left( \hat{\boldsymbol{x}}_{j \vert k},\hat{\boldsymbol{u}}_{j}; \boldsymbol{\Theta},\mathcal{D} \right)}_{\mathsf{Nonlinear\ System \ Model}}  ,    \ j=0,\cdots,N-1,   \\
		& \hat{ \boldsymbol{x}}_{0 \vert k} = \boldsymbol{x}_k , \\
		& \hat{\boldsymbol{x}}_{j \vert k} \in         \mathcal{X}  ,  \ j=0,\cdots,N_p-1, \\
		& \hat{\boldsymbol{u}}_{j}  \in  \mathcal{U} , \ j=0,\cdots,N_p-1, \\
		& \hat{\boldsymbol{x}}_{N \vert k}   \in \mathcal{X}_{N} ,
	\end{split}
\end{align}
where $\lambda \geq 1$ represents a design parameter for the controller, $N_p$ represents the prediction horizon, and $N_c \leq N_p$ is the control horizon, ${{V}_{f}}(\cdot)$ serves as  the terminal cost function, $\mathcal{X}_{N} $ defines the terminal region.
Achieving asymptotic stability is guaranteed through the design of a terminal control law $\kappa_f(\cdot)$.

Terminal constraints are omitted because the terminal cost function $V_f$ is scaled by $\lambda \geq 1$.

	\begin{definition}
		The closed loop system\\ $\boldsymbol{x}_{k+1}=F(\boldsymbol{x}_k,\kappa_{f}(\boldsymbol{x};\boldsymbol{\Theta},\mathcal{D});\boldsymbol{\Theta},\mathcal{D})$ is ISS, if there exists a class $\mathcal{KL}$ function $\alpha_x$ and a class $\mathcal{K}$ function $\alpha_r$ such that
		\begin{align}
			|| \boldsymbol{x}_k - \boldsymbol{x}_r || \leq \alpha_x (|| \boldsymbol{x}_0 - \boldsymbol{x}_r ||) + \alpha_r\left( \underset{k \geq 0}{\max} || \boldsymbol{e}_k ||\right) 
		\end{align}
		holds for all initial states $\boldsymbol{x}_0$, errors $\boldsymbol{e}_k$, and for all $k$.
		$\kappa_{f}(\boldsymbol{x};\boldsymbol{\Theta},\mathcal{D})$ is the nominal controller resulting from the optimization problem \eqref{42} \citep{khalil2002nonlinear}.
	\end{definition}

ISS demonstrates the bounded effect of uncertainty, where the uncertain system converges asymptotically to the reference point if the error asymptotically dissipates. 
	Therefore, we first establish stability in the nominal case by assuming that the prediction error is zero, i.e., the true system is the same as the nominal model. Then when the prediction error is non-zero but bounded, we establish robust stability under ISS.

\begin{assumption}\label{ass-4-2}
	It is assumed there exists a terminal controller $\kappa_f(\cdot):\mathbb{R}^{n_x}\rightarrow \mathbb{R}^{n_u}$, a terminal cost function $V_f(\cdot):\mathbb{R}^{n_x}\rightarrow \mathbb{R}$ and a region $\Omega_\vartheta=\left\lbrace \boldsymbol{x}:V_f(\boldsymbol{x})\leq \vartheta \right\rbrace $, with $\vartheta$ representing a positive constant.  For all $\boldsymbol{x} \in \Omega_\vartheta$, the following conditions hold:
	\begin{align}
		\begin{split}
			&\alpha_1(|| \boldsymbol{x}-\boldsymbol{x}_r  ||) \leq V_f(\boldsymbol{x}-\boldsymbol{x}_r) \leq \alpha_2(|| \boldsymbol{x}-\boldsymbol{x}_r ||) \\
			&V_f(\boldsymbol{x}^{+} -\boldsymbol{x}_r) -  V_f(\boldsymbol{x}-\boldsymbol{x}_r) \leq - \ell_s(\boldsymbol{x}-\boldsymbol{x}_r, \kappa_f(\boldsymbol{x})-\boldsymbol{u}_r) \\
			&\kappa_f(\boldsymbol{x})  \in \mathcal{U}
		\end{split}
	\end{align}
	where $\alpha_i$ is $\mathcal{K}_{\infty}$ functions, $\boldsymbol{x}^{+}=F(\boldsymbol{x}-\boldsymbol{x}_r,   \kappa_f(\boldsymbol{x}) -\boldsymbol{u}_r ; \mathcal{D}    )$.
\end{assumption}

\begin{assumption}\label{ass-4-6a}
	It is assumed that there exists a class $\mathcal{K}$ functions, $\alpha_y$, such that  ${\forall} \boldsymbol{u} \in \mathcal{U}$, the stage cost \eqref{47a} satisfies $\ell(\boldsymbol{x}_r,\boldsymbol{u}_r)=0$ and $\ell(\boldsymbol{x},\boldsymbol{u}_k)\geq \alpha_y(|| \boldsymbol{x}-\boldsymbol{x}_r ||)$.
\end{assumption}


\begin{theorem}
	Let Assumption \ref{ass-4-2} and Assumption \ref{ass-4-6a} hold, and let $\kappa_{f}$ be the resulting control law of the optimization problem in \eqref{42}. ${\forall} \lambda \geq 1$, there exists a region of feasibility without terminal constraint defined as $\boldsymbol{x}_{N_c,N_p}(\lambda)  $ such that  ${\forall} \boldsymbol{x} \in \boldsymbol{x}_{N_c,N_p}(\lambda)$, the closed-loop system $\boldsymbol{x}_{k+1}=F(\boldsymbol{x}_k,\kappa_{f}(\boldsymbol{x};\boldsymbol{\Theta},\mathcal{D});\boldsymbol{\Theta},\mathcal{D})$ can be asymptotically stable. 
	The size of the set $\boldsymbol{x}_{N_c,N_p}(\lambda)  $ increases with $\lambda$
	\citep{limonStabilityConstrainedMPC2006}.
\end{theorem}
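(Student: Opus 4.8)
The plan is to use the optimal value function of the finite-horizon problem \eqref{42} as a Lyapunov function, following the terminal-cost weighting strategy of \citep{limonStabilityConstrainedMPC2006}. Denote by $V_{N_c,N_p}(\boldsymbol{x})$ the optimal cost $\mathcal{J}^{\ast}$ attained at $\boldsymbol{x}$. First I would establish the \emph{nominal} asymptotic stability by verifying the three standard Lyapunov properties for this candidate. The lower bound $V_{N_c,N_p}(\boldsymbol{x}) \geq \ell(\boldsymbol{x},\cdot) \geq \alpha_y(\| \boldsymbol{x}-\boldsymbol{x}_r \|)$ follows immediately from Assumption \ref{ass-4-6a}, which also gives $V_{N_c,N_p}(\boldsymbol{x}_r)=0$, so the candidate is positive definite. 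An upper bound of class $\mathcal{K}_{\infty}$ in a neighbourhood of the reference is obtained from the sandwich inequality $\alpha_1 \leq V_f \leq \alpha_2$ in Assumption \ref{ass-4-2}, by evaluating the cost of the feasible trajectory generated by $\kappa_f$ starting inside $\Omega_\vartheta$.

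The core step is the descent inequality. Given the optimal input sequence at time $k$ with predicted terminal state $\hat{\boldsymbol{x}}_{N \vert k}$, I would construct a suboptimal candidate at $k+1$ by shifting the sequence one step and appending the terminal control $\kappa_f(\hat{\boldsymbol{x}}_{N \vert k})$. Assumption \ref{ass-4-2} guarantees that this appended control keeps the state inside $\Omega_\vartheta$ and that $V_f(\boldsymbol{x}^{+}-\boldsymbol{x}_r)-V_f(\boldsymbol{x}-\boldsymbol{x}_r)\leq -\ell_s(\cdot)$. Summing this telescoping decrease and comparing the candidate cost with the optimal cost yields
\begin{align}
V_{N_c,N_p}(\boldsymbol{x}_{k+1}) - V_{N_c,N_p}(\boldsymbol{x}_k) \leq -\ell(\boldsymbol{x}_k, \boldsymbol{u}_k) \leq -\alpha_y(\| \boldsymbol{x}_k-\boldsymbol{x}_r \|),
\end{align}
which is the required strict decrease. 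The subtlety is that this candidate is admissible for the \emph{unconstrained} problem \eqref{42} only when the optimal predicted terminal state already lies in $\Omega_\vartheta$; the feasibility region $\boldsymbol{x}_{N_c,N_p}(\lambda)$ is precisely the set of initial states for which this holds.

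The hard part is the role of $\lambda$. Here I would show that weighting the terminal cost by $\lambda \geq 1$ forces the optimal predicted terminal state into $\Omega_\vartheta$ over a set that grows with $\lambda$. The argument compares the unconstrained value function $V^{\lambda}_{N_c,N_p}$ with the value function $\tilde{V}_{N_c,N_p}$ of the problem that carries the explicit terminal constraint $\hat{\boldsymbol{x}}_{N \vert k}\in\mathcal{X}_{N}=\Omega_\vartheta$. Because dropping the terminal constraint can only lower the optimum, $V^{\lambda}_{N_c,N_p}\leq\tilde{V}_{N_c,N_p}$, while the extra penalty $(\lambda-1)V_f(\hat{\boldsymbol{x}}_{N \vert k}-\boldsymbol{x}_r)$ charged to any trajectory whose terminal state escapes $\Omega_\vartheta$ grows with $\lambda$. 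Balancing these two bounds shows that for every $\boldsymbol{x}$ at which $\tilde{V}_{N_c,N_p}$ is finite there is a finite $\lambda$ beyond which the unconstrained optimizer cannot afford to leave $\Omega_\vartheta$; monotonicity of the penalty in $\lambda$ then gives the nesting $\boldsymbol{x}_{N_c,N_p}(\lambda_1)\subseteq\boldsymbol{x}_{N_c,N_p}(\lambda_2)$ for $\lambda_1\leq\lambda_2$, establishing that the set grows with $\lambda$.

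Finally, on $\boldsymbol{x}_{N_c,N_p}(\lambda)$ the descent inequality is valid and the shifted-appended candidate keeps the next terminal state in $\Omega_\vartheta$, so the region is forward invariant; combining positive definiteness, the $\mathcal{K}_{\infty}$ upper bound, and the strict decrease, standard Lyapunov arguments \citep{khalil2002nonlinear} give asymptotic stability of the closed loop toward $(\boldsymbol{x}_r,\boldsymbol{u}_r)$. I expect the genuine obstacle to be the $\lambda$-monotonicity and nonemptiness of $\boldsymbol{x}_{N_c,N_p}(\lambda)$: it requires quantifying how strongly the scaled terminal cost pulls the optimal terminal prediction into $\Omega_\vartheta$, rather than the routine telescoping Lyapunov computation.
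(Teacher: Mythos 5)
Your proposal reconstructs, correctly in strategy, the argument that the paper itself never carries out: the paper's proof of this theorem consists entirely of deferring to Theorem~4 of \citep{limonStabilityConstrainedMPC2006} and Theorem~1 of \citep{manzanoOutputFeedbackMPC2019}, and the value-function machinery you sketch (lower bound from Assumption~\ref{ass-4-6a}, local upper bound from Assumption~\ref{ass-4-2}, descent via the shifted sequence appended with $\kappa_f$, and enlargement of the domain of attraction through the weight $\lambda$) is precisely the content of the cited result. The one place where your own reasoning does not close as written is the $\lambda$-step. Your ``balancing'' argument gives, for an optimizer whose terminal prediction escapes $\Omega_\vartheta$, the chain $\lambda\vartheta < V^{\lambda}(\boldsymbol{x}) \leq \tilde{V}^{\lambda}(\boldsymbol{x}) \leq \tilde{V}^{1}(\boldsymbol{x}) + (\lambda-1)\vartheta$, which collapses to $\vartheta < \tilde{V}^{1}(\boldsymbol{x})$: the $\lambda$'s cancel, so this yields a contradiction only on the $\lambda$-independent set where $\tilde{V}^{1}(\boldsymbol{x}) \leq \vartheta$, and in particular cannot produce a region that grows with $\lambda$. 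The cited reference repairs this by defining the region as a level set of the form $\left\lbrace \boldsymbol{x} : V^{\lambda}(\boldsymbol{x}) \leq N d + \lambda\vartheta \right\rbrace$, where $d>0$ is a lower bound on the stage cost outside $\Omega_\vartheta$; if the optimal terminal prediction escaped $\Omega_\vartheta$, then either every predicted state stays outside $\Omega_\vartheta$ (forcing the cost above $Nd+\lambda\vartheta$) or some predicted state enters $\Omega_\vartheta$ and the terminal-controller tail gives a cheaper feasible candidate, a contradiction either way; monotone growth of this level set in $\lambda$ then follows from the bound $V^{\lambda} \leq \tilde{V}^{1} + (\lambda-1)\vartheta$ against the threshold $Nd+\lambda\vartheta$. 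You correctly flagged this as the genuine obstacle, so the gap is one of execution rather than of strategy.
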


\begin{proof}
		Under the conditions that Assumption \ref{ass-4-2} and Assumption \ref{ass-4-6a} are satisfied, Theorem 4 of \citep{limonStabilityConstrainedMPC2006} indicates that the controller obtained from the solution of optimization problem \eqref{42} will stabilize closed-loop systems $\boldsymbol{x}_{k+1}=F(\boldsymbol{x},\kappa_{f}(\boldsymbol{x};\boldsymbol{\Theta},\mathcal{D});\boldsymbol{\Theta},\mathcal{D})$ asymptotically in the set $\boldsymbol{x}_{N_c,N_p}(\lambda)  $. In particular, Theorem 1 of \citep{manzanoOutputFeedbackMPC2019} further proves the recursive feasibility and asymptotically stability of the closed-loop system. For detailed proofs, readers are referred to the above two papers.	
		
\end{proof}

Having established that the nominal system is asymptotically stable, the analysis of robustness continues below.
	\begin{assumption} \label{ass-4-7a}
		It is assumed that there exists a constant $\nu  \textless \infty $ such that the prediction error $\delta_k$ 
		\begin{align}
			\delta_k  = \boldsymbol{y}_{k+1} -  \hat{\boldsymbol{y}}_{k+1}
		\end{align}
		about the true system $\boldsymbol{y}_{k+1}$ and nominal systems $\hat{\boldsymbol{y}}_{k+1}$ is bounded $|\delta_k| \leq  \nu$.
\end{assumption}

Prediction errors include approximation errors and measurement noise. 
	Measurement noise is assumed to lie within a tight set and is therefore bounded. 
	In addition, as the size of the data set increases, this uncertainty decreases and the posterior probability distribution narrows, referring to Remark \ref{rem8}.  
	Therefore, if the prediction error is bounded on the training set and the training data is representative of the data seen in the closed-loop operation, Assumption \ref{ass-4-7a} is reasonable.

\begin{assumption}\label{ass-4-8a}
	It is assumed that the nominal model \\ $F \left( {\boldsymbol{x}}_{ k},{\boldsymbol{u}}_{k}; \boldsymbol{\Theta},\mathcal{D} \right)$ is uniformly continuous  with respect to ${\boldsymbol{x}}_{ k}$, $\forall \boldsymbol{x}_k \in \boldsymbol{x}_{N_c,N_p}(\lambda) $, $\forall \boldsymbol{u}_k \in \mathcal{U} $.
	Additionally, it is assumed that both the stage cost function $\ell \left( {\boldsymbol{x}_k},\boldsymbol{u}_k \right)$ and the terminal cost function $V_f(\boldsymbol{x}_k)$ exhibit uniform continuity, $\forall \boldsymbol{x}_k \in \boldsymbol{x}_{N_c,N_p}(\lambda)$, $\forall \boldsymbol{u}_k \in \mathcal{U} $.
\end{assumption}

\begin{theorem} \label{th4}
Let Assumption \ref{ass-4-4a}-\ref{ass-4-6a} hold, and let $\kappa_{f}$ be the predictive controller of the optimization problem in \eqref{42}.
		For a sufficiently small bound on the uncertainty,
		$\Omega_\delta \subseteq \boldsymbol{x}_{N_c,N_p}(\lambda)$  is a robust invariant set.
		the closed-loop system $\boldsymbol{x}_{k+1}=F(\boldsymbol{x},\kappa_{f}(\boldsymbol{x};\boldsymbol{\Theta},\mathcal{D});\boldsymbol{\Theta},\mathcal{D})$ satisfies the ISS property within the robust invariant set $\Omega_\delta $ \citep{limonInputtoStateStabilityUnifying2009}.
\end{theorem}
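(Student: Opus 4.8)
The plan is to use the optimal value function of the finite-horizon problem \eqref{42}, written $V(\boldsymbol{x}):=\mathcal{J}^{\ast}_{N_c,N_p}(\boldsymbol{x})$, as an ISS-Lyapunov function, treating the prediction error $\delta_k$ of Assumption \ref{ass-4-7a} as the disturbance input. The argument builds directly on the nominal asymptotic stability already established in the preceding theorem, upgrading it to robust stability by quantifying how a bounded one-step error perturbs the cost-to-go.

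First I would establish that $V$ is a valid Lyapunov function for the nominal closed loop. The lower bound $\alpha_1(\|\boldsymbol{x}-\boldsymbol{x}_r\|)\le V(\boldsymbol{x})$ follows from the stage-cost lower bounds in Assumptions \ref{ass-4-4a} and \ref{ass-4-6a}, while the upper bound $V(\boldsymbol{x})\le \alpha_2(\|\boldsymbol{x}-\boldsymbol{x}_r\|)$ follows from the terminal ingredients of Assumption \ref{ass-4-2} (the terminal cost $V_f$ sandwiched between $\mathcal{K}_\infty$ functions, together with the standard tail construction using $\kappa_f$). The nominal descent
\begin{align}
 V(\hat{\boldsymbol{x}}^{+})-V(\boldsymbol{x}) \le -\ell\!\left(\boldsymbol{x},\kappa_f(\boldsymbol{x})\right) \le -\alpha_y(\|\boldsymbol{x}-\boldsymbol{x}_r\|),
\end{align}
where $\hat{\boldsymbol{x}}^{+}=F(\boldsymbol{x},\kappa_f(\boldsymbol{x});\boldsymbol{\Theta},\mathcal{D})$ is the nominal successor, is inherited from the monotonicity property used to prove the preceding theorem, since the shifted terminal-controller sequence is feasible and the terminal decrease of Assumption \ref{ass-4-2} cancels the appended stage cost.

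Next I would account for the disturbance. The true successor $\boldsymbol{x}^{+}$ differs from the nominal $\hat{\boldsymbol{x}}^{+}$ only through the prediction error, which is bounded by $\nu$ (Assumption \ref{ass-4-7a}). Using the uniform continuity of $F$, $\ell$ and $V_f$ from Assumption \ref{ass-4-8a}, the optimal value function inherits a class-$\mathcal{K}$ modulus, so there is a class-$\mathcal{K}$ function $\sigma$ with $|V(\boldsymbol{x}^{+})-V(\hat{\boldsymbol{x}}^{+})|\le \sigma(|\delta_k|)\le\sigma(\nu)$. Combining this with the nominal descent yields the ISS-Lyapunov inequality
\begin{align}
 V(\boldsymbol{x}^{+})-V(\boldsymbol{x}) \le -\alpha_y(\|\boldsymbol{x}-\boldsymbol{x}_r\|)+\sigma(\nu).
\end{align}
Defining $\Omega_\delta$ as a sublevel set $\{\boldsymbol{x}:V(\boldsymbol{x})\le c\}$ and choosing $\nu$ small enough that $\Omega_\delta\subseteq \boldsymbol{x}_{N_c,N_p}(\lambda)$ and that the decrease dominates $\sigma(\nu)$ outside a small neighborhood of $\boldsymbol{x}_r$, the inequality renders $\Omega_\delta$ robustly positively invariant while recursive feasibility is preserved. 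The ISS estimate, with a $\mathcal{KL}$ term in the initial state and a $\mathcal{K}$ term in $\sup_{k}\|\boldsymbol{e}_k\|$, then follows from the standard ISS-Lyapunov theorem cited in \citep{limonInputtoStateStabilityUnifying2009}.

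The main obstacle is the continuity estimate in the disturbance step: one must show that the finite-horizon optimal value function, not merely its data $F,\ell,V_f$, admits a class-$\mathcal{K}$ modulus, so that the effect of the bounded error on the cost-to-go is genuinely $\sigma(\nu)$. This requires propagating the single-step perturbation through the prediction horizon, where each stage amplifies it through the continuity modulus of $F$, and arguing that the perturbed optimizer remains feasible. This is precisely where the hypothesis of a \emph{sufficiently small bound on the uncertainty} is indispensable, since it guarantees the perturbed trajectory stays inside $\boldsymbol{x}_{N_c,N_p}(\lambda)$ so that the comparison between the nominal and disturbed value functions is well defined.
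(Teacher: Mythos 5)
Your proposal is correct and follows essentially the same route as the paper: the paper's own proof is a one-line appeal to Theorem~4 of \citep{limonInputtoStateStabilityUnifying2009} (nominal asymptotic stability from Assumptions \ref{ass-4-2} and \ref{ass-4-6a}, plus the bounded prediction error of Assumption \ref{ass-4-7a} and the uniform continuity of Assumption \ref{ass-4-8a}), and what you have written is precisely the ISS-Lyapunov argument that underlies that cited theorem, including the key step the paper leaves implicit --- that uniform continuity of $F$, $\ell$ and $V_f$ must be propagated through the horizon to obtain a class-$\mathcal{K}$ modulus for the optimal value function itself. The only point to watch is that the upper bound $V(\boldsymbol{x})\le\alpha_2(\|\boldsymbol{x}-\boldsymbol{x}_r\|)$ obtained from the terminal ingredients holds a priori only on $\Omega_\vartheta$ and must be extended to the terminal-constraint-free feasibility region $\boldsymbol{x}_{N_c,N_p}(\lambda)$ as in \citep{limonStabilityConstrainedMPC2006}, but this is a standard refinement rather than a gap.
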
 

\begin{proof}
		Theorem 2 is a statement of \cite[Theorem 4]{limonInputtoStateStabilityUnifying2009} within the context of the NARX model. 
		Assumption \ref{ass-4-2} and Assumption \ref{ass-4-6a} ensure the asymptotic stability of the nominal system. If Assumption \ref{ass-4-7a} and \ref{ass-4-8a} holds, Theorem 4 of \citep{limonInputtoStateStabilityUnifying2009} holds directly.
		
\end{proof}

\begin{remark} \label{rem8}
	In \eqref{35}, the first term of the variance $\text{Var}[y_{k+1}]$  indicates the uncertainty in the predictive distribution and that this uncertainty is due to noise $\beta$. The second term indicates the uncertainty due to the $\boldsymbol{\omega}$ in \eqref{26}. 
	With an expanding data set, the level of uncertainty diminishes, causing the posterior probability distribution to become more concentrated.
	It can be shown that $\text{Var}[y_{k+1}] \leq \text{Var}[y_{k}]$ \citep{qazazUpperBoundBayesian1997}.
\end{remark}

\section{PEMFC Temperature Control Case studies}\label{sec:experiments} 

Within the thermal management system of the PEMFC (Proton Exchange Membrane Fuel Cell), sensors gather data on temperature and load, transmitting it to the controller. This controller, in turn, adjusts the pump and fan speeds according to predefined control objectives. A water pump propels the coolant through the stack and heat sink, dissipating the heat generated by the stack to the surroundings.
This work is based on a fuel cell stack model rated at 30kW, consisting of 180 individual fuel cells with an optimum operating temperature point of 60 $^\circ C$ to 80  $^\circ C$. 
Our PEMFC experiments are analyzed from data collected from real devices.
The principle of the cooling system of PEMFC is shown in Figure \ref{fig.1}.
To simplify the model to achieve predictive control, the response of the system is assumed and simplified: 1. All heat generated by the stack is absorbed by the coolant; 2. The temperature is uniformly distributed across the stack and heat sink with no temperature variations; 3. Pump operation only facilitates coolant circulation and does not affect its temperature; 4. Heat dissipation exclusively occurs through the heat sink, and the coolant pipeline itself does not contribute to heat dissipation.
In addition, to simplify the controller design, the cooling water pressure is not one of the control variables. 
Therefore, the cooling water pump is not arranged between the radiator and the stack, but between the water tank and the radiator, which can prevent the cooling water from being forced into the stack.

\begin{figure}
	\centering
		\includegraphics[width=\columnwidth]{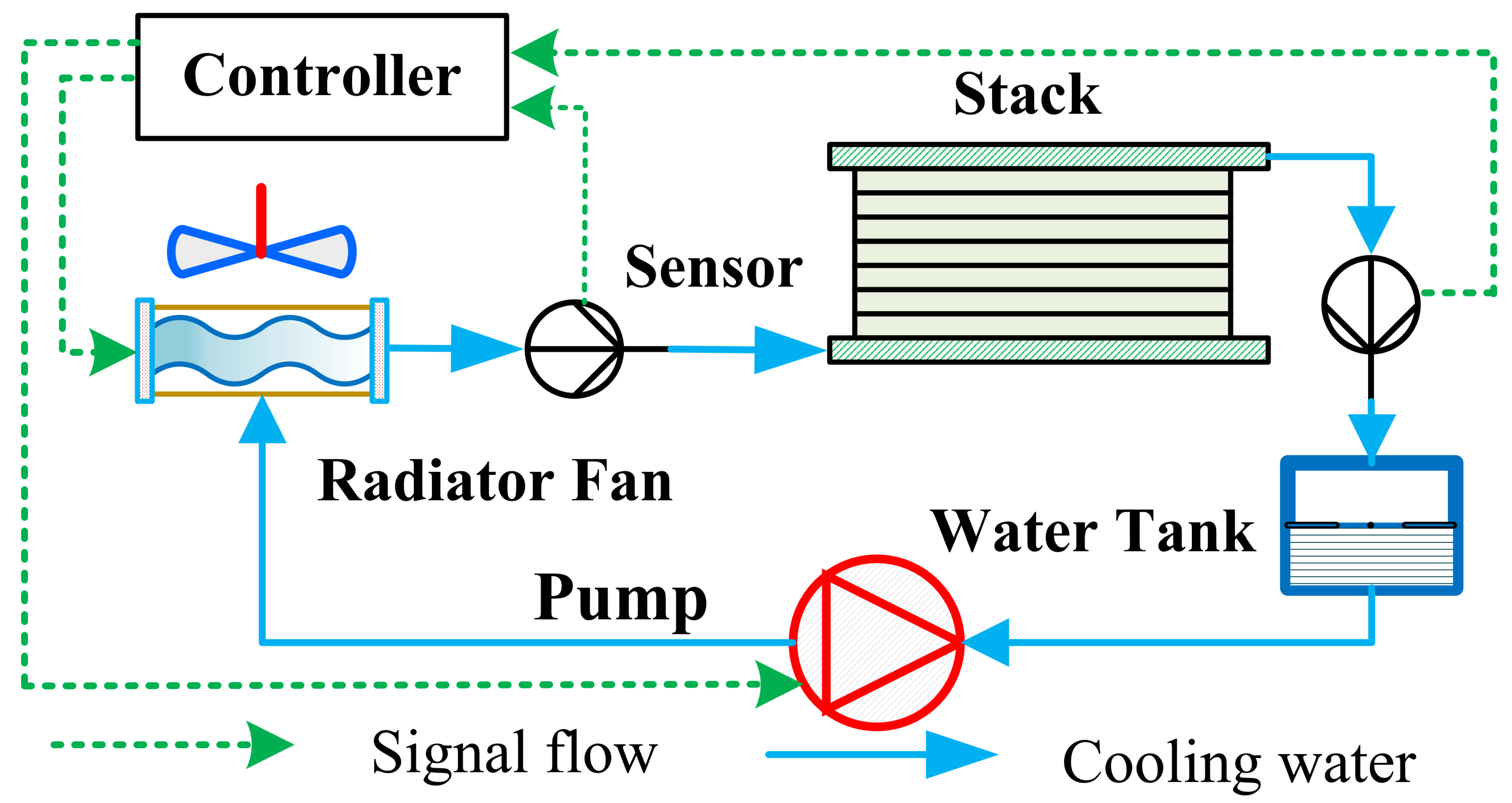}
	\caption{ Cooling system schematic.}
	\label{fig.1}
\end{figure}

The stack heat production is $Q_{gen}$
\begin{align}
	Q_{gen} = Q_{react}-P_{st}
\end{align}
where $ Q_{react}$ 
the heat produced by the electrochemical reaction within the stack, while $P_{st}$ signifies the power utilized by the stack load.

The heat exchange process between the stack and the cooling water can be calculated
\begin{align}
	\dot{T}_{w,out} = -\frac{W_w}{m_{st}}(T_{w,out}-T_{w,in})+\frac{Q_{gen}}{m_{st}c_w}
\end{align}
where $\dot{T}_{w,out}$ is the  rate of change of the stack cooling water outlet temperature, $m_{st}$ is the mass of the cooling water contained inside the stack. $W_w$ is the mass flow of cooling water through the stack. 
Meanwhile, $T_{w,out}$ and $T_{w,in}$ represent the temperatures at the outlet and inlet of the cooling water for the stack, respectively, and $c_w$ stands for the specific heat capacity of water.

The rate of change of the tank water temperature $\dot{T}_{w,t}$ is calculated as
\begin{align}
	\dot{T}_{w,tank} = \frac{W_w}{m_{tank}}(T_{w,out}-T_{w,tank})
\end{align}
where $m_{tank}$ represents the mass of water in the same tank.

According to the previous assumptions, the radiator cooling water inlet temperature matches the tank temperature $T_{w,t}$, while the radiator cooling water outlet temperature corresponds to the temperature of the stack cooling water inlet $T_{w,in}$. 
The rate of change of stack cooling water inlet temperature is calculated as follows
\begin{align}
	\dot{T}_{w,in}=\frac{W_w}{m_{rad}}(T_{w,tank}-T_{w,in}-\Delta T)
\end{align}
where $m_{rad}$ represents the mass of cooling water contained in the radiator piping. $\Delta T$ represents the temperature decreases caused by the radiator and can be calculated as follows
\begin{align}
	\Delta T = \frac{A_{rad}(T_{w,tank}-T_{amb})h_{rad}(W_a)}{c_{w}W_{w}}
\end{align}
where $A_{rad}$ represents the useful heat transfer area of the radiator. $T_{amb}$ represents the ambient  temperature. 
The $h_{rad}(W_a)$, as the heat transfer coefficient of the radiator, is a nonlinear function of the mass flow of air $W_a$ and can be calculated as
\begin{align}
	h_{rad}(W_a) = -1.4495W^2_a+5.9045W_a-0.1147.
\end{align}

Combining with the above formulas, the individual equations of the cooling system are organized into state space equations. The 
state variables are denote as $\boldsymbol{x}=[x_1,x_2,x_3]=[T_{w,tank},T_{w,in},T_{w,out}]$, the 
input is given by $\boldsymbol{u}=[u_1,u_2]=[W_w,W_a]$
. Finally, the nonlinear form of the system is obtained as follows
\begin{align} \label{51}
	\begin{split}
		\dot{x}_1 &= \frac{u_1}{m_{rad}}\left( x_1-x_2- \frac{ A_{rad}(x_1-T_{amb})h_{rad}(u_2)}{c_w u_1}\right) \\
		\dot{x}_2 &= \frac{u_1}{m_{tank}}\left( x_3-x_1\right)  \\
		\dot{x}_3 &=-\frac{u_1}{m_{st}}\left( x_3-x_2\right) +\frac{Q_{gen}}{m_{st}c_w}.
	\end{split}
\end{align}

	Model \eqref{51} is used for the simulation and generation of training data $\mathcal{D}$ with a sampling time of $ts = 0.1 s$. 
	Referring to \eqref{2}, each data point $( y_i , \boldsymbol{z}_i )$ consists of value for $\left(  {y}_{k+1}, \cdots , {y}_{k-n_a}, {u}_{k}, \cdots {u}_{k-n_b}\right) $ values, where $y_{k+1}$ is the output of NSVB and $\boldsymbol{z} = \left(  {y}_k, \cdots , {y}_{k-n_a}, {u}_{k}, \cdots {u}_{k-n_b}\right) $ is the corresponding regressor. 
	Since the obtained training data covers most of the input-output space in the closed-loop operation, it can be assumed that the bounded prediction error of the training data implies the bounded prediction error in the closed-loop operation. 
	In order to train the NSVB model, hyperparameters $ a_0, b_0, c_0$ and $ d_0 $ are initialized to $10^{-5}$ thereby making the inference process dominated by the data.
	In addition, the state is $\boldsymbol{x}_k = [y_k, y_{k-1}, y_{k-2}]$ and the regressor $\boldsymbol{z}$ is $[y_k,y_{k-1},y_{k-2},u_k]$. 
	Referring to \eqref{5} in Section \ref{sec:VB}, the parameter $\boldsymbol{\Theta}=(\boldsymbol{\omega},\boldsymbol{\alpha},\beta)$ of the NARX model is learned via variational inference.
	The target reference equilibrium is a tank temperature $T_{w,tank}$ of $336.15 K$ and  stack cooling water outlet temperature $T_{w,out}$ of $343.15K$, where $K$ is Kelvin scale. The constraints for mass flow of cooling water $W_w$ is $\mathcal{U} = \left\lbrace  0 \leq W_w \leq 60 \right\rbrace $ and the constraints for mass flow of air $W_a$ is $\mathcal{U} = \left\lbrace  0 \leq W_a \leq 2 \right\rbrace $. 
	The output data is added to the random measurement noise $\epsilon \sim  {\mathcal{N}\left( 0, 0.7 \right)}$.
	Predictive horizon equals control horizon $N_c=N_p=10$.

	The terminal controller $\kappa_{f}(\boldsymbol{x})$ is defined as $\kappa_{f}(\boldsymbol{x}) = \boldsymbol{K}^{\top}(\boldsymbol{x}-\boldsymbol{x}_r)+\boldsymbol{u}_r$
	and the terminal cost function ${V}_{f}$ are de"0²3ned as ${V}_{f}(\boldsymbol{x}) = (\boldsymbol{x}-\boldsymbol{x}_r)^{\top}\boldsymbol{P}(\boldsymbol{x}-\boldsymbol{x}_r)$, where $\boldsymbol{K} \in \mathbb{R}^3$ and $\boldsymbol{P} \in \mathbb{R}^{3\times3}$.
	The terminal cost is computed by solving the linear quadratic regulator (LQR) for the linearized model near the reference $\boldsymbol{x}_r$.
	The linearization of the nominal NARX model $\boldsymbol{x}_{k+1} =F \left( {\boldsymbol{x}}_{ k},{\boldsymbol{u}}_{k}; \boldsymbol{\Theta},\mathcal{D} \right)$ with $\boldsymbol{x}_k = [y_k, y_{k-1}, y_{k-2}]$ takes the form $\boldsymbol{x}_k=\boldsymbol{A}\boldsymbol{x}_k+\boldsymbol{B}{u}_k$,
	\begin{align}
		\left[ \begin{matrix}
			{{y}_{k+1}}  \\
			{{y}_{k}}  \\
			{{y}_{k-1}}  \\
		\end{matrix} \right]=\underbrace{\left[ \begin{matrix}
				{{a}_{11}} & {{a}_{12}} & {{a}_{13}}  \\
				1 & 0 & 0  \\
				0 & 1 & 0  \\
			\end{matrix} \right]}_{\boldsymbol{A}}\left[ \begin{matrix}
			{{y}_{k}}  \\
			{{y}_{k-1}}  \\
			{{y}_{k-2}}  \\
		\end{matrix} \right]+\underbrace{\left[ \begin{matrix}
				{{b}_{1}}  \\
				0  \\
				0  \\
			\end{matrix} \right]}_{\boldsymbol{B}}{{u}_{k}}
	\end{align}
	where $a_{11}=\frac{\partial{\mathbb{E}[y_{k}]}}{\partial{y_{k+1}}}$, $a_{12}=\frac{\partial{\mathbb{E}[y_{k}]}}{\partial{y_{k-1}}}$, $a_{13}=\frac{\partial{\mathbb{E}[y_{k}]}}{\partial{y_{k-2}}}$ and
	$b_1=\frac{\partial{\mathbb{E}[y_{k}]}}{\partial{u_{k}}}$ are evaluated at $\boldsymbol{u}_r,\boldsymbol{y}_r$ and $\boldsymbol{u}_r+\boldsymbol{\bigtriangleup},\boldsymbol{y}_r+\boldsymbol{\bigtriangleup}$. $\boldsymbol{\bigtriangleup}$ is a small perturbation and $\boldsymbol{\bigtriangleup} = 0.015$.
	The coef"0²3cients for the linearized system evaluated at the reference point, $a_{11}=3.124,a_{12}=-2.124,a_{13}=-0.078$ and $b_1=0.002$. 

	A barrier function was added to the stage cost to account for soft output constraints
	\begin{align} \label{62}
		\begin{split}
		\ell(\boldsymbol{x}_k,{u}_k)&=||\boldsymbol{x}_k-\boldsymbol{x} _r||^2_{\boldsymbol{Q}} + || u_k-u_r ||^2_{\boldsymbol{R}} \\
			&+\eta\left( 1-\exp(\frac{-\delta(y_k-\mathcal{Y})}{\rho})     \right) 
		\end{split}
	\end{align}
	where $Q=50$, $R=1$, $\eta=100$, and $\rho=1$.
	Using the LQR method, $\boldsymbol{P}$ and $\boldsymbol{K}$ can be obtained
	\begin{align}
		\begin{split}
			\boldsymbol{p}&=\left[ \begin{matrix}
				1204&	-2532&	-6518 \\
				-2532&	9278&	11739 \\
				-6518&	11739&	42656  \\
			\end{matrix} \right]  \\
			\boldsymbol{K} &= \left[ 3.384	-6.653	-0.077\right] 
		\end{split}
	\end{align}

	Consider the selection of terminal costs and terminal controllers, and also the selection of the total stage cost of \eqref{62}. Then given the prediction error in closed-loop operation and the NSVB method. Assumptions \ref{ass1} - \ref{ass-4-8a} are satisfied and the closed-loop system is guaranteed to be ISS by Theorem \ref{th4}.

The tests for this work were performed on my laptop with a cpu of AMD Ryzen 7 5800H at 3.20 GHz and 16GB of RAM at 3200MHz. The system is Windows 11 23H2 and MATLAB version 2022b.

In order to show the superiority of the developed NSVB method, the GP method was chosen as a comparison.
Both methods experiment with the collected data to learn the model. 
We tested the model learning ability of both methods with limited training data. We use the first 250 data points to learn the model, and then use the subsequent 750 data points to verify the reliability of model learning.
The test results of GP are shown in Figure \ref{fig.2}.
The GP method demonstrates excellent model learning performance for the first 300 data. 
But after 300 points, GP deviates from the true value in the output and the prediction error of $\boldsymbol{x}_1$ is close to 50 and the prediction error of $\boldsymbol{x}_3$ is close to 10. 
The test results of NSVB are shown in Figure \ref{fig.3}. $\boldsymbol{x}_3$ shows a very high accuracy with prediction error close to 2.5, and the prediction error of $\boldsymbol{x}_1$ is close to 25. Thus, it can be seen that the NSVB method has a better performance.
Existing machine learning and deep learning models generally fail to achieve good out-of-sample performance with very small training samples. Models trained with small samples can easily fall into overfitting to small samples and underfitting to the target task. Sparse models can automatically select important features, which can avoid overfitting and thus improve generalization ability.
The NSVB method leverages the sparsity of the data by selecting only the features or parameters that significantly contribute to the prediction results. 
As a result, NSVB reduces unnecessary computational and storage overheads while improving the generalization ability and interpretability of the model.

\begin{figure}
	\centering
	\includegraphics[width=\columnwidth]{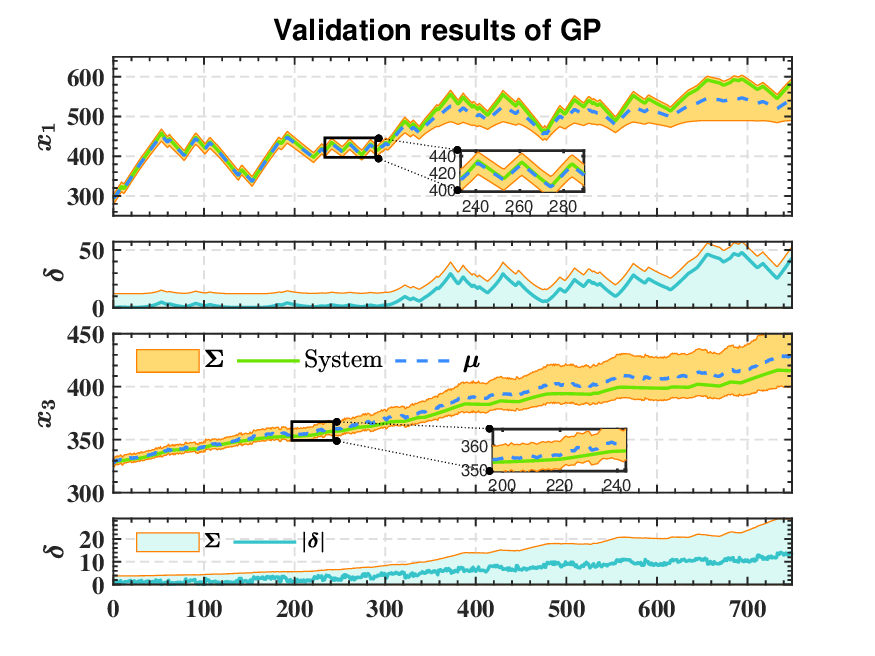}
	\caption{ GP validation results with limited training data.}
	\label{fig.2}
\end{figure}

\begin{figure}
	\centering
	\includegraphics[width=\columnwidth]{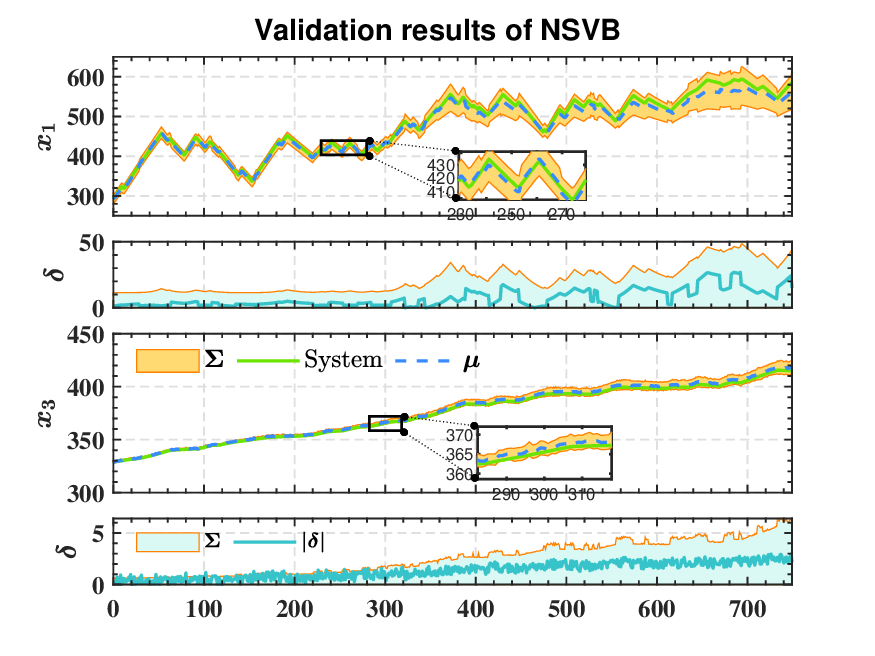}
	\caption{ NSVB validation results with limited training data.}
	\label{fig.3}
\end{figure}


In addition, we tested the model learning performance of the two methods under noise. The training sample is the first 500 data points added with triple random measurement noise, and the validation sample is the last 750 data points. 
The test results of GP are shown in Figure \ref{fig.4}. $\boldsymbol{x}_1$ gradually deviates from the true value with a prediction error close to 180. $\boldsymbol{x}_3$ has a prediction error close to 20.  The test results of NSVB are shown in Figure \ref{fig.5}. To our surprise, the test result of $\boldsymbol{x}_1$ is excellent and maintains a very high fit under noise. 
The prediction error is close to 13 for $\boldsymbol{x}_1$ and 10 for $\boldsymbol{x}_1$. This further validates the effectiveness of the NSVB method as sparsity reduces the interference of noise.
GP does not fit the real data more accurately under noise perturbation. 
It also shows that establishing Bayesian sparse method can effectively improve the reliability and generalization of model learning.

\begin{figure}
	\centering
	\includegraphics[width=\columnwidth]{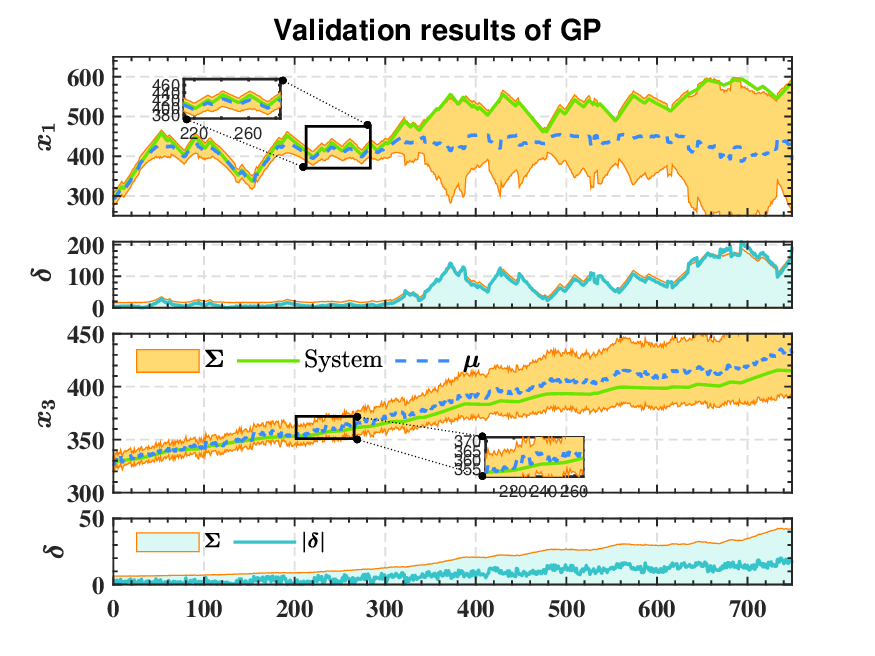}
	\caption{ GP validation results with noise.}
	\label{fig.4}
\end{figure}

\begin{figure}
	\centering
	\includegraphics[width=\columnwidth]{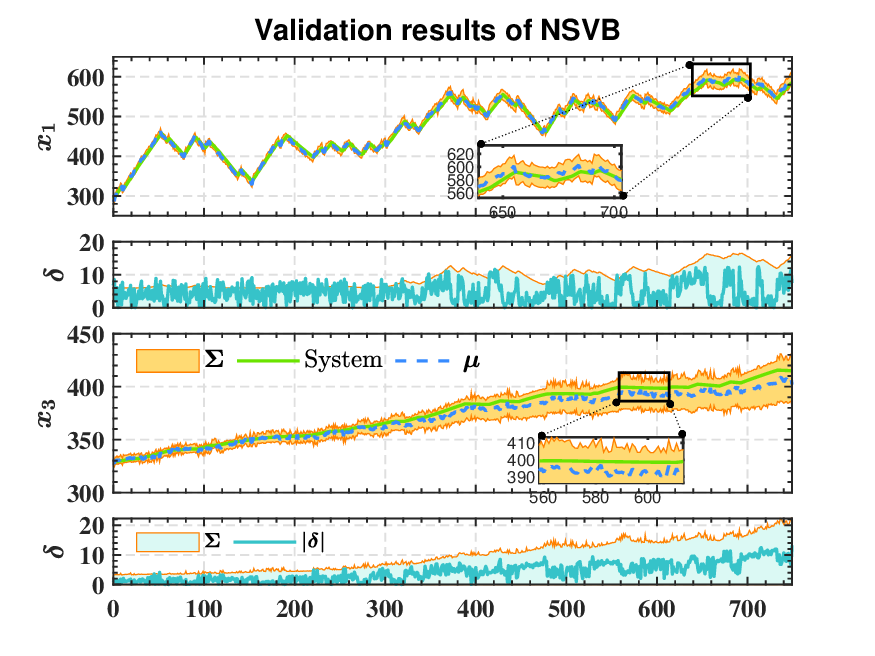}
	\caption{ NSVB validation results with noise.}
	\label{fig.5}
\end{figure}

	In the control performance test, the ODE-MPC is considered optimal because the true model is used as a predictive model.
	The test results of ODE-MPC are shown in Figure \ref{fig.6}. 
	The effect of external noise is not considered in the tests of ODE-MPC.
	The test results of ODE-MPC illustrate the validity of the ODE equations and controllers.
	\begin{figure}
		\centering
		\includegraphics[width=\columnwidth]{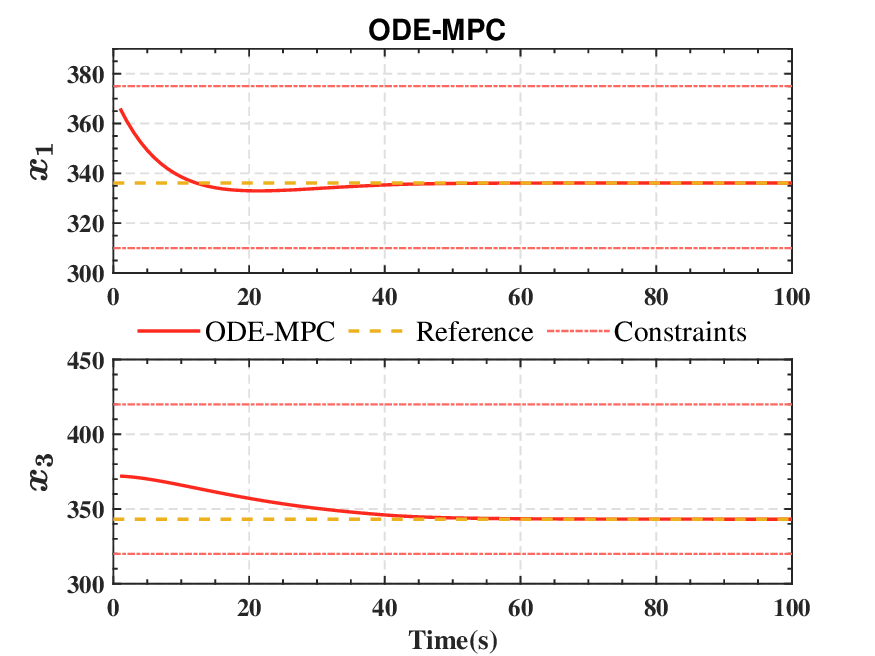}
		\caption{ ODE-MPC.}
		\label{fig.6}
	\end{figure}
	Although the NSVB method has better predictive performance, which suggests that NSVB can learn more accurate models. However, there is a need to test the control performance of NSVB.

	The test results of GP-MPC are shown in Figure \ref{fig.7}, which shows that GP-MPC  is equipped with good control performance under noise perturbation. 
	Although the prediction performance of GP is not as good as that of NSVB, the effect of the controller still ensures the tracking of the set reference.
	\begin{figure}
		\centering
		\includegraphics[width=\columnwidth]{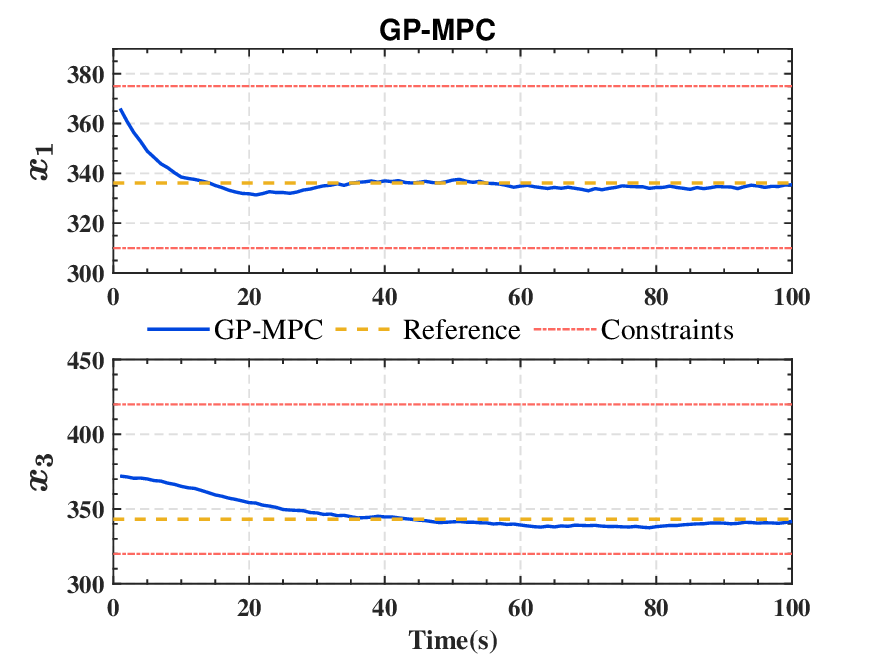}
		\caption{ GP-MPC.}
		\label{fig.7}
	\end{figure}

	The test results of NSVB is shown in Figure \ref{fig.8}. It can be seen that the NSVB-MPC method still has better control performance than GP-MPC. 
	Since the controllers of GP-MPC and NSVB-MPC are the same, the control results look somewhat similar. 
	However, considering the larger scale industrial applications, NSVB-MPC will show worthwhile control performance. The control law of NSVB-MPC is shown in Figure \ref{fig.9}.
	\begin{figure}
		\centering
		\includegraphics[width=\columnwidth]{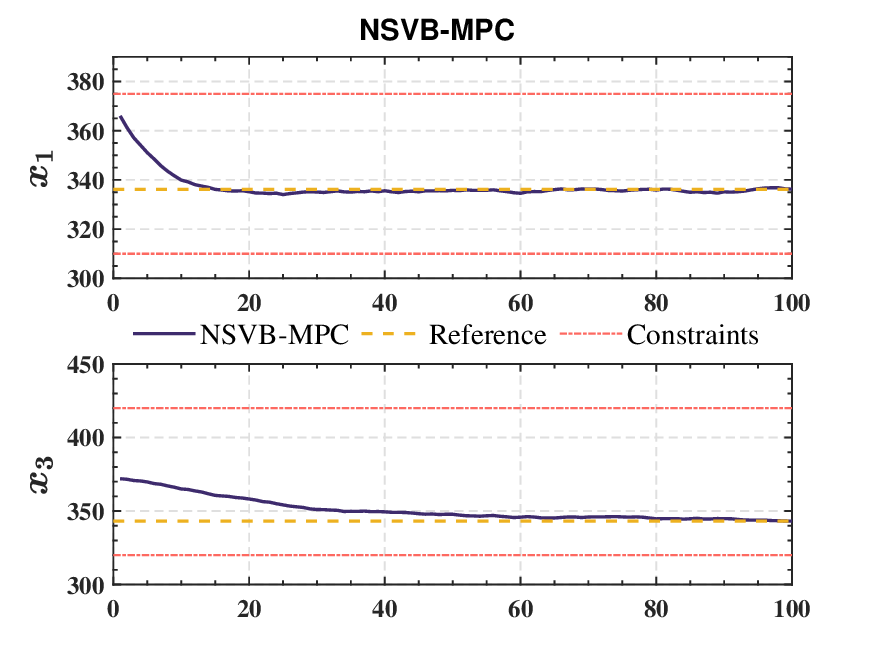}
		\caption{ NSVB-MPC.}
		\label{fig.8}
	\end{figure}
	\begin{figure}
		\centering
		\includegraphics[width=\columnwidth]{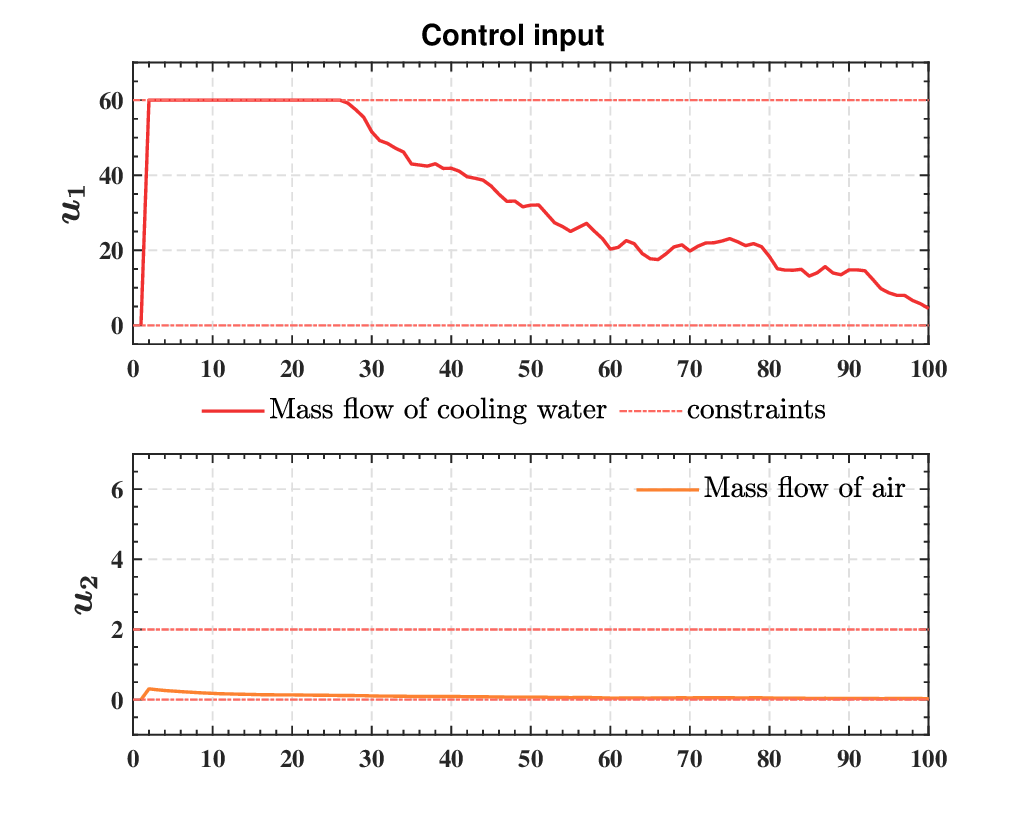}
		\caption{ Control law of NSVB-MPC.}
		\label{fig.9}
	\end{figure}

Finally, a comparison of the cost functions of the two methods is shown in Figure \ref{fig.10}. It can be seen that NSVB-MPC not only converges faster, but also has a larger cost function. 
This illustrates the significant superiority of NSVB-MPC. 
The running times of the two methods are shown in Table \ref{tbl1}. It can be seen that the NSVB-MPC method takes much less time than GP-MPC. 
This suggests that sparse can significantly reduce the time complexity of the model and process the data in a more efficient manner. 
It is worth noting that the laptop used by the authors has limited performance. On industrial scenarios, NSVb-MPC has potential payoffs in terms of computational efficiency.

	\begin{figure}
	\centering
	\includegraphics[width=\columnwidth]{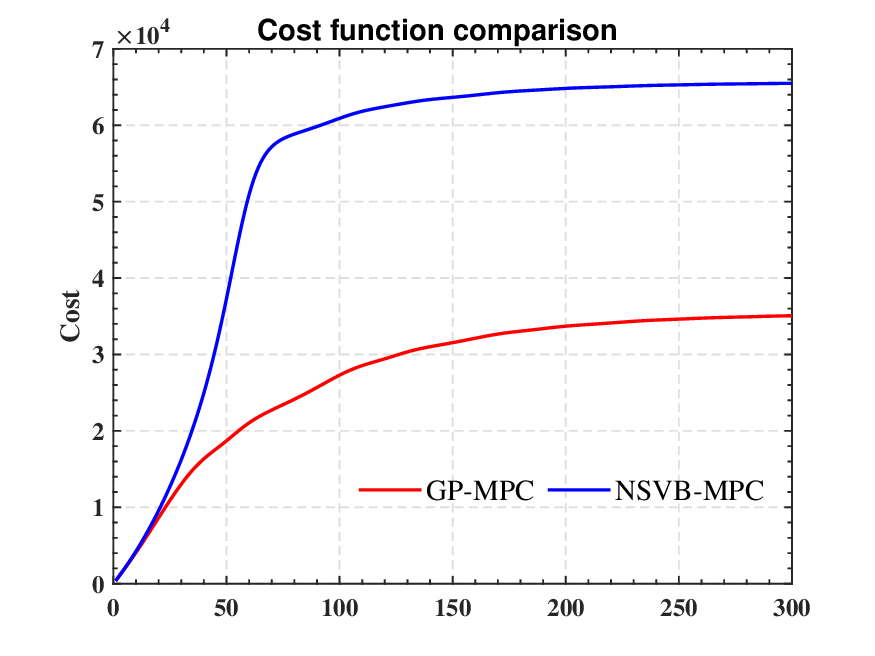}
	\caption{ Cost function.}
	\label{fig.10}
	\end{figure}

\begin{table}
	[width=.9\linewidth,cols=4,pos=h]
	\caption{Running time.}\label{tbl1}
	\begin{tabular*}{\tblwidth}{@{} LLLL@{} }
		\toprule
		  Method & GP-MPC  & NSVB-MPC\\
		\midrule
		Time (s) & 120  & 52  \\
		\bottomrule
	\end{tabular*}
\end{table}

\section{Conclusions} \label{sec:conclusions}

In this study, we introduce a data-driven MPC approach that leverages recent developments in machine learning, emphasizing the model learning process using Bayesian methods. Utilizing available data, learning-based MPC techniques enhance the precision of model predictions. Our research focuses on the development of a variational Bayesian inference framework for MPC methods, ensuring both stability and robustness. In the model learning stage, we account for noise in model analysis and quantify uncertainty, resulting in more dependable model acquisition. In the predictive control phase, our controller, devoid of terminal constraints, eliminates the need for constructing robust invariant sets while still providing stability and robustness assurances. Finally, we validate the practicality of NSVB-MPC through temperature control in a PEMFC system.

\section*{CRediT authorship contribution statement}

\textbf{Qi Zhang}: Conceptualization, Data curation, Formal analysis, Investigation, Methodology, Resources, Software, Validation, Visualization, Writing - original draft, Writing - review \& editing. \textbf{Lei Wang}: Conceptualization, Supervision. \textbf{Weihua Xu}: Supervision.  \textbf{Hongye Su}: Project administration. \textbf{Lei Xie}: Funding acquisition, Supervision.

\section*{Declaration of competing interest}
The authors declare that they have no known competing financial interests or personal relationships that could have appeared to influence the work reported in this paper.

\section*{Acknowledgments} 
We would like to acknowledge the time and effort devoted by anonymous reviewers to improving the quality of this paper. 

\bibliographystyle{cas-model2-names}

\bibliography{MPC}


%
%

\end{document}